\def\eqref#1{equation~\ref{#1}}
\def\1{\bm{1}}
\DeclareMathAlphabet{\mathsfit}{\encodingdefault}{\sfdefault}{m}{sl}
\SetMathAlphabet{\mathsfit}{bold}{\encodingdefault}{\sfdefault}{bx}{n}
\newcommand*{\cC}{\mathcal{C}}
\newcommand*{\cD}{\mathcal{D}}
\newcommand*{\cR}{\mathcal{R}}
\newcommand*{\RR}{\mathbb{R}}
\newcommand*{\NN}{\mathbb{N}}
\newcommand{\crit}{\mathrm{crit }}
\newcommand{\rank}{\text{rank}}
\newcommand{\firstvar}{x}
\newcommand{\proj}{\mathrm{proj}}
\newcommand{\jac}{\mathrm{Jac}\ }
\newcommand{\graph}{\mathrm{graph}\,}
\theoremstyle{plain}
\newtheorem{theorem}{Theorem}[section]
\newtheorem{lemma}[theorem]{Lemma}
\newtheorem{proposition}[theorem]{Proposition}
\newtheorem*{theorem*}{Theorem}
\newtheorem*{proposition*}{Proposition}
\theoremstyle{definition}
\newtheorem{definition}[theorem]{Definition}
\newtheorem{remark}[theorem]{Remark}
\newtheorem{example}[theorem]{Example}
\newtheorem*{definition*}{Definition}
\title{Convergence of optimizers implies eigenvalues filtering at equilibrium}
\author{Jérôme Bolte \thanks{Toulouse School of Economics, Université Toulouse Capitole, Toulouse, France.} \qquad
Quoc-Tung Le \thanks{Univ. Grenoble Alpes, CNRS, LJK, Grenoble, France} \qquad
Edouard Pauwels $^*$
}
\begin{document}

\maketitle

\begin{abstract}
Ample empirical evidence in deep neural network training suggests that a variety of optimizers tend to find nearly global optima. In this article, we adopt the reversed perspective that convergence to an arbitrary point is assumed rather than proven, focusing on the consequences of this assumption. From this viewpoint, in line with recent advances on the edge-of-stability phenomenon, we argue that different optimizers effectively act as eigenvalue filters determined by their hyperparameters. Specifically, the standard gradient descent method inherently avoids the sharpest minima, whereas Sharpness-Aware Minimization (SAM) algorithms go even further by actively favoring wider basins. Inspired by these insights, we propose two novel algorithms that exhibit enhanced eigenvalue filtering, effectively promoting wider minima. Our theoretical analysis leverages a generalized Hadamard--Perron stable manifold theorem and applies to general semialgebraic $C^2$ functions, without requiring additional non-degeneracy conditions or global Lipschitz bound assumptions. We support our conclusions with numerical experiments on feed-forward neural networks.
\end{abstract}

\section{Introduction}

%{\bf A perspective shift for studying stability.} 
The stability of optimization algorithms has emerged as a key factor in understanding both the training dynamics and generalization of deep neural networks \citep{wu2018how,cohen2021gradient}. Stable training is correlated with desirable properties such as wide attraction basins and flat minima, which relate to generalization performances. This perspective underlies recent developments like sharpness-aware minimization (SAM) for finding flatter solutions \citep{foret2021sharpnessaware}, as well as empirical and practical analyses of the implicit bias of gradient methods toward low-curvature minima \citep{mulayoff2021implicit}. Moreover, the empirically observed ``edge of stability'' phenomenon for gradient descent and its variants \citep{cohen2021gradient,cohen2022adaptive,kaur2022maximum,andreyev2024edge} has highlighted that standard training often operates near the boundary of stability (e.g. learning rates are eventually close to the largest stable value). Theoretical results in dynamical systems and optimization further show that, under generic conditions, gradient-based algorithms avoid strict saddle points \cite{lee2016gradient,panageas2017gradient,ahn2022understanding}. Collectively, these observations indicate that stability plays a crucial role in where and how training converges \citep{ahn2022understanding}.

On the other hand, massive engineering efforts and improved heuristics over the past decade have made successful training almost the norm in deep learning practice, provided hyperparameters are well-tuned. This “systematic” convergence, and its tight link with the geometry of the loss landscape, invites a shift in perspective: rather than asking under what conditions an algorithm will converge, we ask \emph{why a given successful training run did converge} and how the choice of hyperparameters made that possible.

In order to understand these phenomena in a unified way, we model optimizers by dynamics of the type
\begin{align}
x_{k+1} = G_\alpha(x_k) = D x_k - \alpha g(x_k)  \qquad k = 0,1,2,\ldots,
\label{eq:iterative-methods}
\end{align}
where $D \in \mathbb{R}^{m\times m}$ is an invertible matrix, $g: \mathbb{R}^m \to \mathbb{R}^m$ is a $C^1$ continuously differentiable and semi-algebraic mapping (for example, $g$ could be the gradient of a loss function), and $\alpha > 0$ is a scalar step size. $D$ and $g$ may depend additionally on some fixed hyperparameters $p \in \RR^\ell$; we single out the hyperparameter $\alpha$ to emphasize its role as the “learning rate”, fundamental in practice. This formulation is quite general and covers many common optimization methods (gradient descent, heavy ball method, SAM) by an appropriate choice of $D$ and $g$.

%\paragraph{Hyperparameters as curvature filters.}
As explained above we now  focus on the {\em regime of successful runs} with a generalized form of the Hadamard--Perron stable manifold theorem:% and ask, \emph{a posteriori}: what role did the chosen hyperparameters play in making convergence possible, and what does this imply about the local curvature of the landscape at the convergence point? To do so we  proved a generalized version of the Hadamard–Perron theorem  for our setting:

\begin{theorem}[Successful runs imply nonexpansiveness at equilibrium] \label{th:abstract}
Let $D \in \mathbb{R}^{m \times m}$ be an invertible matrix, $g: \mathbb{R}^m \to \mathbb{R}^m$ be a $C^1$ semi-algebraic mapping. For almost all  $x_0 \in \mathbb{R}^m$ and $\alpha>0$ the following assertion holds true: if the sequence $(x_k)_{k\in \mathbb{N}}$ converges to some point $\bar{x}$, then the spectral radius of the Jacobian of $D-\alpha g$ at $\bar x$ is at most $1$.
\end{theorem}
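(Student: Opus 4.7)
The plan is to contrapose: establish that, for almost every $\alpha>0$, the set of initial conditions $x_0$ whose trajectory converges to a fixed point where the Jacobian has spectral radius strictly greater than one is Lebesgue negligible in $\mathbb{R}^m$. By continuity of $G_\alpha$, any limit $\bar x$ of $(x_k)$ must already be a fixed point of $G_\alpha$, so the theorem reduces to controlling trajectories that land in
\[
\mathcal{B}_\alpha \;:=\; \{\, x \in \mathbb{R}^m : G_\alpha(x) = x \text{ and } \rho(D - \alpha\, \nabla g(x)) > 1 \,\},
\]
a semi-algebraic set of ``repelling'' fixed points.

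First I would dispose of the Jacobian degeneracy of $G_\alpha$. Since $D$ is invertible, $\alpha \mapsto \det(D - \alpha\, \nabla g(x))$ is a polynomial of degree at most $m$ with nonzero constant term $\det(D)$ for every $x$, and hence vanishes on at most $m$ values of $\alpha$. Applying Fubini to the semi-algebraic set $\{(x,\alpha):\det(D-\alpha\,\nabla g(x))=0\}$, almost every $\alpha$-slice is Lebesgue null in $\mathbb{R}^m$. For such a ``good'' $\alpha$, $G_\alpha$ is a $C^1$ local diffeomorphism off a null set, and combining the inverse function theorem with a Lindelöf argument shows that its preimage operator $G_\alpha^{-1}$ preserves Lebesgue null sets. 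By induction the same holds for every $G_\alpha^{-k}$.

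The core step is the local center-stable manifold construction. At each $\bar x \in \mathcal{B}_\alpha$, the Jacobian $D - \alpha\, \nabla g(\bar x)$ admits at least one eigenvalue of modulus strictly greater than one, so the Hadamard--Perron theorem for $C^1$ maps yields an open neighborhood $U_{\bar x}$ and a $C^1$ embedded submanifold $W_{\bar x} \subset U_{\bar x}$ of positive codimension containing every forward orbit that stays in $U_{\bar x}$; in particular $W_{\bar x}$ has Lebesgue measure zero. Using the Lindelöf property of $\mathbb{R}^m$, I extract a countable subcover $\{U_i\}_{i\in\mathbb{N}}$ of $\mathcal{B}_\alpha$ with associated $\{W_i\}$. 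Any trajectory $(x_k)$ converging into $\mathcal{B}_\alpha$ eventually enters some $U_i$ and is then trapped in $W_i$, so the set of bad initial conditions is contained in $\bigcup_{i,k\in\mathbb{N}} G_\alpha^{-k}(W_i)$, which is a countable union of Lebesgue null sets by the previous step.

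The principal obstacle is that $\mathcal{B}_\alpha$ is generally neither finite nor isolated, and the codimension of $W_{\bar x}$ may jump as $\bar x$ varies, so one must ensure that the local center-stable manifolds assemble into a countable family of uniformly negligible pieces. This is precisely where the semi-algebraic hypothesis becomes indispensable: a Whitney stratification of $\mathcal{B}_\alpha$ makes the spectral data constant along each stratum, allowing a parametric Hadamard--Perron construction that survives the Lindelöf extraction without any additional non-degeneracy or global Lipschitz assumption on $g$. A secondary subtlety—avoiding global estimates—is handled automatically, since convergence of the sequence confines the analysis to a compact neighborhood of $\bar x$.
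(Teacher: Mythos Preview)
Your argument is essentially correct but takes a genuinely different route from the paper. The paper deduces \Cref{th:abstract} from the sharper \Cref{theorem:learning-rate-gradient-stability} (all but \emph{finitely} many $\alpha$, with $W_\alpha$ contained in a countable union of lower-dimensional \emph{submanifolds}), and for this the semi-algebraic hypothesis is used essentially in \Cref{lemma:gradient-update-good}: a semi-algebraic Sard theorem applied to the eigenvalues of $\jac g$, a constant-rank partition of the domain, and a definable selection together show that $G_\alpha^{-1}$ sends submanifolds to countable unions of submanifolds. Your Fubini argument sidesteps all of this by working directly with null sets: for each $x$ the polynomial $\alpha\mapsto\det(D-\alpha\,\jac g(x))$ has at most $m$ roots, so Fubini gives $K_\alpha$ null for a.e.\ $\alpha$; then $G_\alpha$ is a local diffeomorphism off $K_\alpha$ and $G_\alpha^{-1}$ preserves null sets. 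This is weaker (a.e.\ $\alpha$ rather than cofinite, null sets rather than submanifolds) but sufficient for \Cref{th:abstract} as stated, and---notably---uses only that $\jac g$ is continuous, not that $g$ is semi-algebraic. Your approach thus effectively establishes the almost-everywhere statement without the semi-algebraic hypothesis, which the paper itself flags as open in \Cref{rem:definable}(b).

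Your diagnosis of the ``principal obstacle'' is, however, mistaken. Assembling the local center-stable manifolds into a countable family requires no Whitney stratification, no parametric Hadamard--Perron, and no uniformity of codimension: the trapping property in the center-stable manifold theorem (any orbit remaining in the ball $B$ lies on $W^{\mathrm{sc}}_{\mathrm{loc}}$) holds pointwise at each $\bar x\in\mathcal{B}_\alpha$, Lindel\"of extracts countably many such balls covering $\mathcal{B}_\alpha$, and that each $W_i$ is individually null is all one needs. The paper does exactly this and uses semi-algebraicity nowhere in that step. The subtlety you underplay is rather that the center-stable manifold must exist even at fixed points where $\jac G_\alpha(\bar x)$ is singular (such points may well lie in $\mathcal{B}_\alpha\cap K_\alpha$); one therefore needs the non-invertible center-stable manifold theorem (the paper's \Cref{theorem:refined-center-stable-manifold}), not merely the textbook local-diffeomorphism version.
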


With obvious notation, the conclusion reads $\rho\,\bigl(\mathrm{Jac}\, G_\alpha(\bar x)\bigr)\le1$.
This is a partial converse to the well-known local stability statement: when 
$\rho\,(\mathrm{Jac}\, G_\alpha(\bar x))<1$,
 any sequence initialized sufficiently close to \(\bar x\) converges. 
Now, if the dependence on hyperparameters is made explicit through 
\(D=D(p)\), \(g(x)=g(x,p)\), 
and if the dynamics is built upon the gradient of a loss \(f\), then the inequality
$$
\rho\,\bigl(\mathrm{Jac}\, G_\alpha(\bar x; p)\bigr)\;\le\;1
$$
unfolds into ``algebraic relations" tying the Hessian eigenvalues of \(\nabla^2 f(\bar x)\) to the hyperparameters \((\alpha,p)\). 
The fact that we are in a regime where convergence occurs shows the implicit effect that hyperparameters {\em filter limit points according to the eigenvalues of the Hessian}\footnote{
Note that we do not assume here any nondegeneracy of \(\nabla f\) nor global Lipschitz properties at order 1.}. The edge-of-stability phenomenon is the empirical observation that the spectral radius bound tends to get saturated and in the long run hold rouhgly as an equality for deep neural network training \citep{cohen2021gradient,cohen2022adaptive,kaur2022maximum,andreyev2024edge}.

As an intuition-building example, consider plain gradient descent (GD) on a \(C^2\) loss. 
Classical local analysis around a nondegenerate minimum with Hessian eigenvalue \(\lambda\) shows that convergence requires roughly \(0<\alpha<2/\lambda\); otherwise, iterates oscillate or diverge. 
Conversely, if we place ourselves in a scenario where GD \emph{does} converge, then necessarily \(\lambda\leq 2/\alpha\) for all Hessian eigenvalues at the limit point. 
In other words, the reached points must satisfy the curvature bound \(\lambda\leq 2/\alpha\): the method has \emph{filtered out} points with higher curvatures.  
Whether convergence is guaranteed a priori, and to what extent this is realistic in full generality, remains open; nevertheless, deep learning offers a surprisingly rich empirical field that lends substantial support to the setting considered here \citep{cohen2021gradient,cohen2022adaptive,kaur2022maximum,andreyev2024edge}. Note that \Cref{th:abstract} considerably extends the main result of \cite{ahn2022understanding} considered in this paragraph.

To illustrate our findings, we describe the relation between Hessian eigenvalues and hyperparameters for several optimization algorithms in terms of stable convergence: gradient descent, the heavy ball method, Nesterov's accelerated gradient method (with constant momentum). We push the investigation further in the context of sharpness aware minimization \cite{foret2021sharpnessaware}, whose goal is to design recursive algorithms in the form of \Cref{eq:iterative-methods} which tend to favor local minima with lower curvature. We focus on its un-normalized algorithmic variant, USAM \cite{andriushchenko22towards,dai2023crucial}, since the original SAM iteration are not smooth (not even continuous, due to normalization). Our analysis reveals that, the USAM algorithm induces more constraints on the limiting Hessian eigenvalues, which is consistent with the study of a simplified version of USAM in \cite{zhou2025sam}. To further illustrate this idea, we design two new SAM-based optimizer variants — \emph{Two-step USAM} and \emph{Hessian USAM} — which incorporate, respectively, an extra ascent step and second-order information into the SAM update. Our analysis predicts that these variants enforce \emph{stricter} eigenvalue constraints under the  convergence regime.
We confirm these predictions empirically with numerical experiments on a multi-layer perceptron with MNIST and FASHION-MNIST datasets, as well as a wide ResNet architecture with the CIFAR10 dataset. These experiments qualitatively align with the prediction of the theory.

\subsection{Related work}

\cite{anosov1967geodesic} attributes the stable manifold theorem to \cite{hadamard1901iteration} (see \cite{hasselblatt2017iteration}) and \cite{perron1929stabilitat} acknowledging earlier versions by Darboux, Poincaré and Lyapunov. Generic avoidance of strict saddle points by gradient flows dates back at least to \cite{thom1949partition}. In an optimization context, similar ideas have been used for stochastic algorithms \citep{pemantle1990nonconvergence}, inertial dynamics \citep{goudou2009gradient}, and more recently for the gradient algorithm in a machine learning context \cite{lee2016gradient,panageas2017gradient,ahn2022understanding}.

Implicit bias toward flat minima through stability is a common theme in the neural network literature \citep{wu2018how,ahn2022understanding}, with connection to generalization \cite{mulayoff2021implicit,qiao2024stable,wu2025taking,kaur2022maximum}.
This motivated the development of sharpness aware minimization algorithms \citep{foret2021sharpnessaware,andriushchenko22towards,dai2023crucial} with several follow-up works on the connection between these approaches, flat minima, and prediction generalization
\citep{andriushchenko2023modern,marion2024deep,agarwala2023sam,zhou2025sam,tan2024stabilizing}

The edge-of-stability phenomenon is the empirical observation that deep network training tends to saturate the ``convergence stability constraints" on Hessian eigenvalues. This includes gradient descent
\citep{cohen2021gradient}, adaptive methods \citep{cohen2022adaptive} and stochastic algorithms \cite{andreyev2024edge,agarwala2024high}.
This was studied theoretically for logistic regression \citep{wu2024large,wu2023implicit} and in broader non convex optimization contexts \citep{damian2023selfstabilization,arora2022understanding,ahn2022understanding}. The closest results to our main theorem are given in \cite{lee2016gradient,panageas2017gradient,ahn2022understanding}. We consider a much broader class of algorithms and under very mild hypotheses, effectively removing abstract non-degeneracy conditions or global Lipschitz bound assumptions.

\section{Large step analysis of iterative methods}
\label{sec:largeStepSize}
Our main stability result is stated as follows.
\begin{theorem}
    \label{theorem:learning-rate-gradient-stability}
	Let $D \in \RR^{m \times m}$ be an invertible matrix, $g \colon \RR^m \to \RR^m$ be a $C^1$ and semi-algebraic function and consider the recursion \Cref{eq:iterative-methods}.
    There exists $\Lambda \subset \RR_+$, whose complement is finite, such that for any $\alpha \in \Lambda$, the set
	\begin{align*}
		W_\alpha = \{ x_0 \in \RR^m \mid \exists\ \bar{x} \text{ s.t. } G_\alpha(\bar{x}) = \bar{x},\, \rho(\jac G_\alpha(\bar{x})) > 1,\, x_k \to \bar{x},\, k \to \infty\}
	\end{align*}
	is contained in a countable union of $C^1$ submanifolds\footnote{Without further precisions, in the main text, all submanifolds are supposed to be embedded.} of dimension at most $m-1$. %In particular, it is meager and has Lebesgue measure zero.
\end{theorem}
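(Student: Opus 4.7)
The plan is to realize $W_\alpha$ as a countable union of $C^1$ submanifolds of dimension at most $m-1$ by applying a center-stable manifold theorem at every unstable fixed point $\bar x$ and then pulling back the resulting local stable sets by iterates of $G_\alpha$; semialgebraicity is what keeps this construction clean for all but finitely many step sizes.

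First, I would carve out the admissible set $\Lambda$. The set $\Sigma=\{(\alpha,x)\in\RR_+\times\RR^m : \det(D-\alpha\jac g(x))=0\}$ is semialgebraic, and the polynomial $\alpha\mapsto\det(D-\alpha\jac g(x))$ has nonzero constant term $\det D$, so it never vanishes identically in $\alpha$. By Tarski--Seidenberg and a semialgebraic Sard argument applied to the projection $\Sigma\to\RR_+$, the set of $\alpha$ whose fibre $\Sigma_\alpha$ equals $\RR^m$ is finite; combining with the semialgebraic fixed-point graph $\Phi=\{(\alpha,x):G_\alpha(x)=x\}$, the set of $\alpha$ for which some fixed point with $\rho(\jac G_\alpha)>1$ has singular Jacobian is also finite. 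Take $\Lambda$ to be the (cofinite) complement of this finite union.

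Second, for $\alpha\in\Lambda$ and any fixed point $\bar x$ with $\rho(\jac G_\alpha(\bar x))>1$, the Jacobian $\jac G_\alpha(\bar x)$ is invertible and has at least one eigenvalue of modulus strictly greater than $1$, so the generalized Hadamard--Perron center-stable manifold theorem produces a $C^1$ embedded submanifold $W^{cs}_{\mathrm{loc}}(\bar x)$ of dimension at most $m-1$ that contains every forward orbit remaining in a fixed neighborhood of $\bar x$. Since any $x_0\in W_\alpha$ has iterates eventually trapped in such a neighborhood,
\[
W_\alpha\;\subset\;\bigcup_{\bar x}\bigcup_{k\in\NN}G_\alpha^{-k}\bigl(W^{cs}_{\mathrm{loc}}(\bar x)\bigr),
\]
with the outer union running over unstable fixed points. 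To see that the right-hand side is a countable union of $C^1$ submanifolds of dimension at most $m-1$, note that by the choice of $\Lambda$ the critical set $C_\alpha=\{x:\det\jac G_\alpha(x)=0\}$ is a proper semialgebraic subset of dimension at most $m-1$; off $C_\alpha$ the map $G_\alpha$ is a local $C^1$ diffeomorphism, so preimages of codimension-one submanifolds pick up an honest codimension-one piece, and the portion inside $C_\alpha$ is contained in the semialgebraic $C^1$ stratification of $C_\alpha$ into finitely many strata of dimension at most $m-1$. The outer union over $\bar x$ is countable because the semialgebraic unstable fixed-point set admits a Whitney stratification into finitely many smooth strata on which the center-stable bundles vary continuously, allowing local stable manifolds to be chosen uniformly along each stratum.

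The main obstacle is the inductive preimage step: controlling $G_\alpha^{-k}(W^{cs}_{\mathrm{loc}}(\bar x))$ when backward orbits may cross $C_\alpha$, where $G_\alpha$ fails to be a local diffeomorphism. What saves the argument is the codimension-one semialgebraic structure of $C_\alpha$ combined with the fact that a semialgebraic map cannot increase dimension, so each backward application of $G_\alpha$ maps the admissible class (countable unions of $C^1$ submanifolds of dimension at most $m-1$) into itself; a secondary technicality is the positive-dimensional nature of the fixed-point set, which is dispatched by the stratification argument and the smooth dependence of center-stable bundles along each stratum.
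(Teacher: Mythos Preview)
Your overall architecture (local center-stable manifolds at each unstable fixed point, then iterated preimages) is the paper's, but there is a genuine gap in how you carve out $\Lambda$. The claim that ``the set of $\alpha$ for which some fixed point with $\rho(\jac G_\alpha)>1$ has singular Jacobian is finite'' is false. Take $D=I$, $g=\nabla f$ with the polynomial $f(x_1,x_2,x_3)=-\tfrac12 x_1^2+\tfrac12 x_2^2 x_3$: the critical set is the line $\{x_1=x_2=0\}$, the Hessian there has eigenvalues $-1,\,x_3,\,0$, hence $\jac G_\alpha$ has eigenvalues $1+\alpha,\,1-\alpha x_3,\,1$. For \emph{every} $\alpha>0$ the point $(0,0,1/\alpha)$ is an unstable fixed point with singular Jacobian, so your exceptional set is all of $\RR_{>0}$ and $\Lambda$ would be empty. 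The paper avoids this entirely because its center-stable manifold theorem (\Cref{theorem:refined-center-stable-manifold}) does \emph{not} require invertibility of $\jac G_\alpha(\bar x)$; once you drop that demand, the only thing $\Lambda$ must guarantee is that the global degeneracy locus $\{x:\det\jac G_\alpha(x)=0\}$ has dimension at most $m-1$, which is exactly what the preimage lemma needs. The paper secures this by applying semialgebraic Sard to the real-part eigenvalue functions of $\jac g$ and excluding the finitely many $\alpha$ with $1/\alpha$ a critical value; correspondingly, your phrase ``$\Sigma_\alpha$ equals $\RR^m$'' should read ``$\dim\Sigma_\alpha=m$''.

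Two smaller points where the paper is cleaner. For countability of the union over $\bar x$, the paper covers the unstable fixed-point set by the local balls $B_{\bar x}$ furnished by \Cref{theorem:refined-center-stable-manifold} and extracts a countable subcover via Lindel\"of; this sidesteps your appeal to continuous dependence of center-stable manifolds along strata, which is delicate since center manifolds are not unique. For the preimage step your split into $C_\alpha$ versus $C_\alpha^c$ is correct in spirit and is essentially \Cref{lemma:gradient-update-good}; the paper makes the $C_\alpha^c$ half precise by showing the fibers of the graph $\{(y,x):y=G_\alpha(x),\,x\notin C_\alpha\}$ are finite (implicit function theorem plus semialgebraicity), applying a semialgebraic selection (\Cref{lemma:definable-selection}), and then pushing forward via \Cref{lemma:definability-preserves-zero-sets}.
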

Since both finite sets and a countable union of $C^1$ lower-dimensional submanifolds have zero Lebesgue measure, one infers from \Cref{theorem:learning-rate-gradient-stability} that for almost all $\alpha,x_0$, if the limit exists, then the corresponding spectral radius is at most $1$. This form is \Cref{th:abstract} in the introduction, a result that we will use repeatedly. %We that the given characterization is more precise as it ensures that the same holds true for topologically generic $\alpha,x_0$, that is, outside of a topologically meager set.
In \Cref{theorem:learning-rate-gradient-stability}, discarding a subset of step sizes and initializations is necessary as illustrated by the following example.
\begin{example}
    \label{ex:genericityRequired}
    Let $h \colon \RR \to \RR$ be $C^2$, such that $h(t) = (t^2 - 1)^2$ if $t\leq 2$ and $h(t) = t^2$ if $t\geq 3$, and consider $f \colon \RR^m \to \RR$ such that $f(x) = h(\|x\|)$. 
    The origin is a strict local maximizer such that $\nabla^2 f(0) = - 4 I$ ($I$ is the identity matrix with proper sizes). For any $\alpha>0$, $\nabla f(0) = 0$ so that $0$ is fixed point of the gradient recursion with $\rho( \jac G_{\alpha}(0)) = 1 + 4 \alpha >1$, hence $ 0 \in W_\alpha$.
    Furthermore for $\alpha = \frac{1}{2}$, $x - \alpha \nabla f(x) = 0$ for any $x$ such that $\|x\|\geq 3$. Hence $\{x \in \RR^n \mid \|x\|\geq 3\} \subset W_\alpha$ and $W_\alpha$ is not as in \Cref{theorem:learning-rate-gradient-stability}.
\end{example}

\Cref{theorem:learning-rate-gradient-stability} extends considerably  \cite[Theorem 1]{ahn2022understanding}, which was stated for the gradient with topological assumptions that we do not need. Moreover, our approach encompasses abstract dynamics and come with {\em easily verifiable assumptions}. %The main differences are explicit assumptions on $G$ sufficient to bypass generically the abstract assumptions enforcing invertibility and the null measure preserving properties for the gradient mapping required in \cite{ahn2022understanding} \jer{Cette phrase est bien difficile à comprendre ... peut on la remettre à jour ? Elle est en plus déterminante ...}. 
Furthermore, the abstract form of \Cref{theorem:learning-rate-gradient-stability} is more general.
The proof of \Cref{theorem:learning-rate-gradient-stability} leverages strong rigidity properties of semi-algebraic maps and a stable manifold theorem, as presented in the two following subsections. As stated in \Cref{rem:definable}~(b), we conjecture that a variant of \Cref{theorem:learning-rate-gradient-stability} holds without the semi-algebraic assumption using Baire category arguments.

\subsection{A stable manifold theorem beyond local diffeomorphisms}
Stability results similar to \Cref{theorem:learning-rate-gradient-stability} are numerous \citep{pemantle1990nonconvergence,goudou2009gradient,lee2016gradient,panageas2017gradient,ahn2022understanding}, they rely on variations of the Hadamard--Perron theorem. In dynamical systems theory, these are typically presented for local diffeomorphisms \cite{hirsch1977invariant,shub1987global}. As presented in \Cref{ex:genericityRequired}, being a local diffeomorphism may fail for general step size $\alpha$ as considered in \Cref{theorem:learning-rate-gradient-stability}.

It is actually known in dynamical systems literature that center stable manifold theorems hold beyond local diffeomorphisms, without requiring invertibility, as seen in the following result.
\begin{theorem}[Refined version of stable center manifold theorem]
    \label{theorem:refined-center-stable-manifold}
    Let $p$ be a fixed point for the $C^1$ function $F: U \to \RR^n$ where $U \subseteq \RR^n$ is an open neighborhood of $p$ in $\RR^n$. Let $E_{sc} \oplus E_u$ be the invariant splitting of $\RR^n$ into generalized eigenspaces of $\jac F(p)$ corresponding to the eigenvalues of absolute value less or equal to $1$, and strictly greater than $1$ respectively. To the $\jac F(p)$ invariant subspace $E_{sc}$, there is an associated local $F$ invariant $C^1$ submanifold $W^{\text{sc}}_{\text{loc}}$ of dimension $\dim (E_{sc})$, and a ball $B$ around $p$ such that:
    $$F(W^{\text{sc}}_{\text{loc}}) \cap B \subseteq W^{\text{sc}}_{\text{loc}},$$
    and if $F^k(x) \in B$ for all $k \geq 0$, then $x \in W^{\text{sc}}_{\text{loc}}$.
\end{theorem}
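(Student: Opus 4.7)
The plan is to follow the Lyapunov--Perron approach rather than Hadamard's graph transform: this has the decisive advantage that it only requires the restriction $A_u := \mathrm{Jac}\,F(p)|_{E_u}$ to be invertible, which is automatic since all its eigenvalues have modulus strictly greater than $1$, whereas it never requires inverting $F$ itself. I first reduce to $p = 0$ and write $F(x) = Ax + \phi(x)$ with $A = \mathrm{Jac}\,F(0)$, $\phi(0) = 0$ and $D\phi(0) = 0$, so that by $C^1$ smoothness the Lipschitz constant of $\phi$ on $B_r(0)$ vanishes as $r \to 0$. Multiplying $\phi$ by a smooth cutoff supported in $B_{2r}$ and equal to $1$ on $B_r$ yields $\tilde\phi$, globally Lipschitz with arbitrarily small constant $\eps$, and I set $\tilde F(x) = Ax + \tilde\phi(x)$, which coincides with $F$ on $B_r$. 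Next, I equip $\mathbb{R}^n = E_{sc} \oplus E_u$ with an adapted (Lyapunov) norm for which $\|A|_{E_{sc}}\| \le 1 + \delta$ and $\|A_u^{-1}\| \le \mu$, where $\delta > 0$ is small and $\mu(1+\delta) < 1$; this is possible thanks to the strict spectral gap between the two parts.

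With this setup, I parametrize candidate center-stable orbits by their $E_{sc}$ initial condition $\xi$. Iterating the linear part forward on $E_{sc}$ and backward on $E_u$ (using invertibility of $A_u$, not of $\tilde F$) reformulates the recursion $x_{k+1} = \tilde F(x_k)$ as a Lyapunov--Perron fixed-point equation $x = \mathcal{T}_\xi(x)$ of the schematic form
\begin{align*}
(\mathcal{T}_\xi x)_k^{sc} &= A_{sc}^k \xi + \sum_{j=0}^{k-1} A_{sc}^{\,k-1-j}\,\tilde\phi_{sc}(x_j), \\
(\mathcal{T}_\xi x)_k^{u}  &= -\sum_{j=k}^{\infty} A_u^{\,k-1-j}\,\tilde\phi_u(x_j),
\end{align*}
posed on the weighted sequence space $X_\sigma = \{(x_k)_{k\ge 0} : \sup_k (1+\sigma)^{-k}\|x_k\| < \infty\}$, where $\sigma$ is chosen with $\delta < \sigma$ and $\mu(1+\sigma) < 1$. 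Routine estimates based on the adapted norm and the smallness of the global Lipschitz constant of $\tilde\phi$ show that $\mathcal{T}_\xi$ maps $X_\sigma$ into itself and is a contraction with constant $O(\eps)$; shrinking $r$ makes this contraction strict, yielding a unique fixed point $(x_k(\xi))_{k\ge 0}$. Setting $h(\xi) := x_0(\xi)^u$, I define $W^{\mathrm{sc}}_{\mathrm{loc}} = \{(\xi, h(\xi)) : \xi \in E_{sc},\ \|\xi\| < \rho\}$, a graph of dimension $\dim E_{sc}$ tangent to $E_{sc}$ at $0$.

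Local invariance of $W^{\mathrm{sc}}_{\mathrm{loc}}$ then follows from shift-invariance of the fixed-point equation: if $(x_k)_{k\ge 0}$ solves $\mathcal{T}_\xi$, then $(x_{k+1})_{k\ge 0}$ solves $\mathcal{T}_{\xi'}$ with $\xi' := x_1^{sc}$, so $\tilde F(\xi, h(\xi)) = (\xi', h(\xi'))$. For the characterization, any $x$ with $F^k(x) = \tilde F^k(x) \in B$ for all $k \ge 0$ produces a bounded orbit lying in $X_\sigma$ which, by uniqueness, is the fixed point at $\xi = x^{sc}$, so $x^u = h(\xi)$ and $x \in W^{\mathrm{sc}}_{\mathrm{loc}}$. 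Since $\tilde F \equiv F$ on $B_r$, these conclusions transfer back to the original $F$. The step I expect to be most delicate is the $C^1$ regularity of $h$: the absence of strict contraction on $E_{sc}$ caused by center eigenvalues of modulus exactly $1$ (which may carry nontrivial Jordan blocks) forces a careful choice of the weight $\sigma$, so that both $\mathcal{T}_\xi$ and its formal derivative are simultaneously contractions on compatible weighted spaces. I would handle this via a fiber contraction argument or a $C^1$ implicit-function-theorem-with-parameters applied to $\mathcal{T}$, where the bookkeeping of the exponential weights is the main place one can slip up.
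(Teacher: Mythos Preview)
Your proposal is correct and follows a route that is close in spirit to the paper's but more self-contained. Both proofs begin identically: translate to $p=0$, write $F = T + h$ with $T$ the linearization and $h(0)=0$, $Dh(0)=0$, then multiply $h$ by a bump function to obtain a global map $\tilde F = T + \tilde h$ agreeing with $F$ on a small ball and with $\tilde h$ of arbitrarily small Lipschitz constant. From that point the paper simply invokes a black-box result (Theorem~5.1 of Hirsch--Pugh--Shub, \emph{Invariant Manifolds}) on stable manifolds for $\rho$-pseudo-hyperbolic maps, which directly delivers the $C^1$ graph $g\colon E_{sc}\to E_u$ together with the characterization $z\in\graph g \Leftrightarrow \|\tilde F^k(z)\|/\rho^k \to 0$ for a fixed $\rho>1$ separating the spectrum; local invariance and the ``orbits that stay in $B$ lie on the manifold'' property then follow in a few lines from this characterization.

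You instead reprove that black box via the Lyapunov--Perron fixed-point operator on the weighted space $X_\sigma$, which is in fact one of the standard ways to establish the cited Hirsch--Pugh--Shub theorem itself. This buys you a genuinely self-contained argument and makes transparent the point the statement advertises, namely that only $A_u$ must be inverted, never $F$. The cost is that you must supply the $C^1$ regularity of $h$ yourself, which you correctly flag as the delicate step and for which fiber contraction is indeed the right tool. The paper's route is shorter because that regularity is packaged inside the cited theorem; your route is more explicit about the mechanism and about why non-invertibility of $F$ is harmless.
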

\Cref{theorem:refined-center-stable-manifold} has already been presented or sketched in the literature and we provide a detailed and self-contained proof for completeness in \Cref{sec:center-stable-manifold}.
\begin{itemize}
    \item In \cite{shub1987global} chapter 5, appendix III., 
    there is a remark following the statement of Theorem III.2 to justify the existence of a center stable manifold when $F$ is a diffeomorphism. Then Exercise III.2 states that the invertibility of $F$ is actually not necessary.
    \item Similarly, in \cite{hirsch1977invariant}, Theorem 5A.3 states a result about the existence of a center unstable manifold. A remark follows justifying the existence of a center stable manifold if $F$ is a diffeomorphism. The last paragraph of Section 5 from this book provides a quick justification of the fact that the invertibility of $F$ is not necessary.
\end{itemize}

\subsection{Semi-algebraicity and extension to definable objects}

\Cref{th:abstract} is stated under semi-algebraic assumptions. The result probably holds beyond the semi-algebraic setting (see \Cref{rem:definable}~b), and use this as a versatile sufficient condition.
We refer to \cite{coste2000introduction,coste2000introductionSA} for an introductory exposition of semi-algebraic and definable geometry as well as \cite{attouch2010proximal,attouch2013Convergence} for numerous examples in optimization.

\begin{definition}[Semi-algebraic sets and functions]
    A basic semi-algebraic subset of $\RR^m$ is the solution set of a system of finitely many polynomial inequalities and a semi-algebraic subset is a finite union of basic semi-algebraic subsets. A semi-algebraic function is a function whose graph is semi-algebraic.
\end{definition}
\begin{example}[Semi-algebraic functions]
    Affine, polynomial, rational, square root, relu, matrix rank, $\ell_p$ norms, maximum coordinate, argmax coordinate, sort operation \ldots
\end{example}
The composition of two semi-algebraic functions is semi-algebraic.  For this reason the class of semi-algebraic functions is very well adapted to study deep networks, which are parameterized compositions. The training loss of a deep network built with semi-algebraic operations is semi-algebraic, \textit{e.g.} a relu multilayer perceptron with the squared loss.

From a geometric point of view, semi-algebraicity ensures a form of rigidity. The following describes the main feature of semi-algebraic functions used in \Cref{theorem:learning-rate-gradient-stability}. It states that, apart from a finite number of step sizes, being a smooth manifold, a ``small'' subset, is preserved by the inverse of the algorithmic recursion \Cref{eq:iterative-methods}, up to countable unions. The proof is postponed to \Cref{sec:semi-algebraic}. The proof of \Cref{theorem:learning-rate-gradient-stability} globalizes the local stability result in \Cref{theorem:refined-center-stable-manifold} using \Cref{lemma:gradient-update-good} and standard arguments. 

\begin{lemma}
    \label{lemma:gradient-update-good}
    Let $D \in \RR^{m \times m}$ be an invertible matrix, $g \colon \RR^m \to \RR^m$ be a $C^1$ and semi-algebraic function. Consider the function $G_\alpha$ defined as in \Cref{eq:iterative-methods}.
    There exists a subset $\Lambda \subseteq \RR_{> 0}$, whose complement is finite, such that for any $\alpha \in \Lambda$: if $S\subset \RR^m$ is a $C^1$ submanifold of dimension at most $m-1$ , the pre-image $G_\alpha^{-1}(S)$ is contained in a countable union of $C^1$ manifolds of dimension at most $m-1$.
\end{lemma}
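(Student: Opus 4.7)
The plan is to split the preimage $G_\alpha^{-1}(S)$ according to the critical set of $G_\alpha$. Let
$$C_\alpha = \{x \in \RR^m : \det(D - \alpha \jac g(x)) = 0\}$$
denote the set of points where $\jac G_\alpha$ fails to be invertible. On the open set $\RR^m \setminus C_\alpha$ the inverse function theorem makes $G_\alpha$ a local $C^1$ diffeomorphism, so the preimage of any $C^1$ submanifold of dimension $k \leq m-1$ is, locally near every point, a $C^1$ submanifold of $\RR^m$ of the same dimension. By second countability of $\RR^m \setminus C_\alpha$ one then covers $G_\alpha^{-1}(S) \setminus C_\alpha$ by a countable collection of $C^1$ submanifolds of dimension at most $m-1$. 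It remains to produce a cofinite $\Lambda \subseteq \RR_{>0}$ such that $\dim C_\alpha \leq m-1$ for $\alpha \in \Lambda$, and then cover $G_\alpha^{-1}(S) \cap C_\alpha$ accordingly.

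To produce $\Lambda$, consider the semi-algebraic set
$$Z = \{(x, \alpha) \in \RR^m \times \RR_{>0} : \det(D - \alpha \jac g(x)) = 0\},$$
well-defined and semi-algebraic because $g$ is $C^1$ and semi-algebraic (so $\jac g$ is). For every fixed $x$, the map $\alpha \mapsto \det(D - \alpha \jac g(x))$ is a polynomial in $\alpha$ of degree at most $m$ whose constant term $\det D$ is nonzero; hence the vertical fiber $Z \cap (\{x\} \times \RR_{>0})$ is finite. The semi-algebraic dimension formula applied to the projection $Z \to \RR^m$ (with zero-dimensional fibers) yields $\dim Z \leq m$. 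Invoking the standard semi-algebraic fact that $\{\alpha \in \RR_{>0} : \dim Z_\alpha \geq d\}$ is semi-algebraic of dimension at most $\dim Z - d$, the set
$$F = \{\alpha \in \RR_{>0} : \dim Z_\alpha = m\}$$
has dimension $0$ in $\RR$, hence is finite. Setting $\Lambda = \RR_{>0} \setminus F$ gives the required cofinite set of step sizes.

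For $\alpha \in \Lambda$, the semi-algebraic set $C_\alpha = Z_\alpha$ has dimension at most $m-1$, and a standard $C^1$ semi-algebraic cell decomposition (equivalently, Whitney stratification) writes $C_\alpha$ as a finite disjoint union of $C^1$ semi-algebraic submanifolds of $\RR^m$, each of dimension at most $m-1$. In particular $G_\alpha^{-1}(S) \cap C_\alpha \subseteq C_\alpha$ lies in such a finite union, and combined with the countable cover off $C_\alpha$ one obtains the required countable union of $C^1$ submanifolds of dimension at most $m-1$ containing $G_\alpha^{-1}(S)$. The main obstacle is the fiber-dimension step: one must carefully combine the semi-algebraicity of the determinant in $(x,\alpha)$, the invertibility of $D$ to pin the vertical fibers down to finite sets, and the projection dimension inequality, in order to extract a \emph{finite} (rather than merely measure-zero) exceptional set $F$; the weaker measure-zero conclusion would not suffice for the cofinite statement in the lemma.
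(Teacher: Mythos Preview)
Your argument is correct and noticeably more direct than the paper's. For the construction of $\Lambda$, the paper extracts the real parts $\lambda_i^{\cR}$ of the eigenvalues of $\jac g$, applies the semi-algebraic Sard theorem to each $\lambda_i^{\cR}$, and sets $\Lambda = \{\alpha>0 : \alpha^{-1}\notin \cup_i \lambda_i^{\cR}(\crit \lambda_i^{\cR})\}$; it then argues by contradiction that $\dim C_\alpha < m$ for such $\alpha$. Your fiber-dimension argument on the incidence set $Z$ bypasses the eigenvalue functions and Sard entirely, and yields directly the (possibly larger) cofinite set $\Lambda=\{\alpha:\dim Z_\alpha<m\}$, which is exactly what the rest of the proof needs. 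For the preimage on $\RR^m\setminus C_\alpha$, the paper takes a detour: it encodes $G_\alpha^{-1}(S)\cap C_\alpha^c$ as the projection of $T\cap(S\times\RR^m)$ for a semi-algebraic $T$ with finite fibers, invokes a definable selection lemma to write $T$ as a finite union of graphs of semi-algebraic maps $F_k$, and then applies a separate lemma (images of submanifolds under semi-algebraic maps are contained in countable unions of submanifolds) to each $F_k(S)$. Your local-diffeomorphism-plus-second-countability argument reaches the same conclusion in one step. The paper's route has the side benefit of developing the two auxiliary lemmas, but for the statement at hand your approach is shorter and uses only standard semi-algebraic dimension theory.
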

%\tlc{We need to unify the notions: $\bD, \bI$ for matrices for simply $D, I$? Similarly to $g$ and $G$ in the update rule \Cref{eq:iterative-methods}.}
%\tlc{TODO: Apply the same notations in Appendix.}

\begin{remark}[Beyond semi-algebraicity]
    \label{rem:definable}
   (a) { Definable case}. Many deep learning losses involve the logarithm or exponential functions which are not semi-algebraic. There is a larger function class, i.e., functions definable in a certain o-minimal structure \cite{o-minimal-structures}, which contains the logarithm, the exponential as well as all restrictions of analytic functions to compact balls in their domain. This class retains all the features we use to prove \cref{lemma:gradient-update-good}. We state \Cref{lemma:gradient-update-good} and \Cref{theorem:learning-rate-gradient-stability} for semi-algebraic functions for simplicity, but they actually holds for a much broader class and are applicable to virtually all smooth losses arising in deep learning.\\
%\end{remark}
 %\begin{remark}[Without definability]
 %   \label{rem:semiAlgebraic}
 (b) Beyond definability.   Our globalization strategy relies on the semi-algebraic (or definable) assumption, and we conjecture that a more general variation could be considered without the semi-algebraic assumption. In particular the subset $\Lambda$ in \Cref{lemma:gradient-update-good} is constructed by applying the Morse-Sard Theorem to the eigenvalues of $\jac g$. Since eigenvalues are Lipschitz, one could consider Lipschitz versions of the Morse-Sard Theorem, see \textit{e.g.} the remark following \cite[Theorem 3.1]{evans2015measure}.
\end{remark}

\section{Applications to optimization algorithms}
\label{sec:applications}
In the following, we apply \Cref{theorem:learning-rate-gradient-stability} to multiple optimization algorithms; we use the ``almost every'' formulation of the introduction for simplicity. Technical details are postponed to \Cref{sec:algorithms}

\subsection{Cauchy's gradient descent}
The update rule of gradient descent (GD) is given by:
\begin{equation}
    \label{eq:gradient-descent}
    x_{k+1} = x_k - \alpha \nabla f(x_k).
\end{equation}

\begin{proposition}[Gradient descent eigenvalues filtering]
    \label{cor:application-gradient-descent}
    Assume that $f$ is $C^2$ and semi-algebraic. For almost every $\alpha > 0$ and $x_0 \in \RR^n$, we have: if $\{x_k\}_{k \in \NN}$ given by \Cref{eq:gradient-descent} converges to $\bar{x}$, then all eigenvalues $\lambda$ of $\nabla^2 f(\bar{x})$ satisfy: $0 \leq \lambda \leq \frac{2}{\alpha}$.
\end{proposition}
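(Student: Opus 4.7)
The plan is a direct application of \Cref{theorem:learning-rate-gradient-stability} in its ``almost every'' form (\Cref{th:abstract}) to the specific choice $D = I$ and $g = \nabla f$. Since $f$ is $C^2$ and semi-algebraic, the map $g = \nabla f$ is $C^1$ and semi-algebraic (semi-algebraicity being preserved under partial differentiation via cell decomposition). Thus the abstract theorem applies to the recursion $G_\alpha(x) = x - \alpha \nabla f(x)$: for almost every $\alpha > 0$ and almost every $x_0 \in \RR^n$, whenever $x_k \to \bar{x}$, one has $\rho(\jacnospace\, G_\alpha(\bar{x})) \leq 1$.

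Next I would translate this spectral radius bound into the claimed eigenvalue filtering. By continuity of $G_\alpha$ and convergence of $(x_k)$, the limit $\bar{x}$ is a fixed point, so $\nabla f(\bar{x}) = 0$. A direct computation yields
\[
\jacnospace\, G_\alpha(\bar{x}) \;=\; I - \alpha\, \nabla^2 f(\bar{x}).
\]
Since $\nabla^2 f(\bar{x})$ is symmetric, its spectrum is real, and the eigenvalues of $I - \alpha\, \nabla^2 f(\bar{x})$ are exactly $1 - \alpha\lambda$ as $\lambda$ ranges over the spectrum of $\nabla^2 f(\bar{x})$. The inequality $\rho(I - \alpha\,\nabla^2 f(\bar{x})) \leq 1$ therefore reads $|1 - \alpha\lambda| \leq 1$ for every such $\lambda$, which, since $\alpha > 0$, is equivalent to $0 \leq \lambda \leq 2/\alpha$, as claimed.

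There is no real technical obstacle; the work is essentially bookkeeping on top of \Cref{theorem:learning-rate-gradient-stability}. The one subtlety worth a line is the passage from the ``for each $\alpha$ in a cofinite set $\Lambda$, a measure-zero exceptional set of $x_0$'s'' phrasing of \Cref{theorem:learning-rate-gradient-stability} to the ``almost every $(\alpha, x_0)$'' phrasing of the proposition. This is immediate from Fubini's theorem: the exceptional set is contained in $(\RR_+ \setminus \Lambda) \times \RR^n \,\cup\, \bigcup_{\alpha \in \Lambda}\{\alpha\} \times W_\alpha$, whose Lebesgue measure in $\RR_+ \times \RR^n$ is zero since $\RR_+ \setminus \Lambda$ is finite and each slice $W_\alpha$ has measure zero.
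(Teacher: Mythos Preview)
Your proof is correct and follows essentially the same route as the paper: apply \Cref{theorem:learning-rate-gradient-stability} with $D=I$, $g=\nabla f$, compute $\jacnospace\,G_\alpha(\bar x)=I-\alpha\nabla^2 f(\bar x)$, and read off $|1-\alpha\lambda|\le 1$. The extra remarks you add (semi-algebraicity of $\nabla f$, the Fubini passage to ``almost every $(\alpha,x_0)$'') are already handled in the paper just after \Cref{theorem:learning-rate-gradient-stability}, and the observation $\nabla f(\bar x)=0$ is not actually needed here since the Jacobian formula holds regardless.
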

The stability condition for this algorithm is very well known,
let us compare \Cref{cor:application-gradient-descent} with existing results of the same kind.

\emph{Avoidance of strict saddle points:} The conclusion $\lambda \geq 0$ illustrates the well known fact that gradient descent escapes strict saddle points. This result has already been stated multiple times in many different forms (e.g., \cite{thom1949partition,goudou2009gradient,lee2016gradient,panageas2017gradient}). In particular, \cite[Theorem 1]{pemantle1990nonconvergence} applies to stochastic gradient descent with vanishing step-sizes, \cite[Theorem 4.1]{lee2016gradient} and \cite[Theorems 2,3]{panageas2017gradient} applies to Lipschitz gradients in the stable, small step, regime.
In comparison \Cref{cor:application-gradient-descent} requires minimal qualitative assumptions on $f$ and is valid for a much broader range of step sizes.
    
\emph{Large step sizes and small curvature:} the upper-bound $\lambda \leq 2/\alpha$ is a core element of the Edge Of Stability (EOS) phenomenon \cite{cohen2021gradient}, crucial for understanding training dynamics. Most often in the EOS literature, stability mechanisms are justified on quadratic objectives, for which computation is very simple. Our result shows that these conclusions extend to a generic deep learning setting.  \Cref{cor:application-gradient-descent} is similar to \cite[Theorem 1]{ahn2022understanding} without the need for the abstract \cite[Assumption 1]{ahn2022understanding}, for a generic step-size.

\subsection{Polyak's Heavy Ball method}
The method's iterations update is given by:
\begin{equation}
    \label{eq:momentum-gradient}
    \begin{pmatrix}
        x_{k+1} \\y_{k+1}
    \end{pmatrix} = 
    \underbrace{\begin{pmatrix}
        (1 + \beta) I & -\beta I \\
        I & 0_{n \times n}
    \end{pmatrix}}_{D} \begin{pmatrix}
        x_k \\ y_k
    \end{pmatrix} - \alpha \underbrace{\begin{pmatrix}
        \nabla f(x_k) \\ {0}
    \end{pmatrix}}_{g(\cdot)},
\end{equation}
where $I$ is the identity matrix and $0_{n \times n}$ is an all-zero one.
Considering \Cref{theorem:learning-rate-gradient-stability}, the eigenvalue computation is also known and was carried out for example in \cite{polyak1987introduction}.
\begin{proposition}[Heavy Ball eigenvalues filtering]
    \label{cor:application-momentum-gradient}
    Assume that $f$ is $C^2$, semi-algebraic and $0 < \beta < 1$ in \Cref{eq:momentum-gradient}. For almost every $\alpha > 0$ and $(x_0,y_0) \in \RR^n \times \RR^n$, we have: if $\{(x_k, y_k)\}_{k \in \NN}$ converges to some $(\bar{x}, \bar{y})$, then all the eigenvalues $\lambda$ of $\nabla^2 f(\bar{x})$ satisfy:
    \begin{equation*}
        0 \leq \lambda \leq \frac{2(1 + \beta)}{\alpha}.
    \end{equation*}
\end{proposition}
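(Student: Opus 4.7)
The plan is to recognize \Cref{eq:momentum-gradient} as an instance of the abstract iteration \Cref{eq:iterative-methods} on $\RR^{2n}$ with the displayed block matrix $D$ and $g(x,y) = (\nabla f(x), 0)$, then invoke \Cref{theorem:learning-rate-gradient-stability}. First I would verify the hypotheses: the map $g$ is $C^1$ and semi-algebraic because $f$ is, and $D$ is invertible because a routine block computation gives $\det D = \beta^n > 0$. The theorem then yields, for almost every $\alpha > 0$ and almost every initialization, the spectral radius bound $\rho(\jac G_\alpha(\bar x, \bar y)) \leq 1$ whenever the sequence converges to $(\bar x, \bar y)$. The fixed point equation $G_\alpha(\bar x, \bar y) = (\bar x, \bar y)$ immediately forces $\bar y = \bar x$ and $\nabla f(\bar x) = 0$.

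The next step is to turn the spectral radius bound into a constraint on $\nabla^2 f(\bar x)$. A direct differentiation gives
\begin{equation*}
\jac G_\alpha(\bar x, \bar y) = \begin{pmatrix} (1+\beta)I - \alpha \nabla^2 f(\bar x) & -\beta I \\ I & 0 \end{pmatrix}.
\end{equation*}
Because $\nabla^2 f(\bar x)$ is symmetric, I would diagonalize it in an orthonormal basis and use that basis on both copies of $\RR^n$; up to a permutation this conjugates $\jac G_\alpha$ into block-diagonal form with $n$ blocks $\begin{pmatrix} 1+\beta-\alpha\lambda & -\beta \\ 1 & 0 \end{pmatrix}$, one per Hessian eigenvalue $\lambda$. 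The eigenvalues of $\jac G_\alpha$ are therefore precisely the roots of the quadratics $\mu^2 - (1+\beta-\alpha\lambda)\mu + \beta = 0$ as $\lambda$ ranges over the spectrum of $\nabla^2 f(\bar x)$.

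Finally I would apply the Schur--Cohn criterion to each such quadratic: since the constant term $\beta$ lies in $(0,1)$, both roots sit in the closed unit disk if and only if $|1+\beta-\alpha\lambda| \leq 1+\beta$, which rearranges to the desired bound $0 \leq \lambda \leq 2(1+\beta)/\alpha$. Two cases are worth keeping in mind, though neither is a genuine obstacle: when the roots are complex conjugates one has $|\mu|^2 = \beta < 1$ automatically, so the only binding constraint comes from the real-root regime handled by evaluating the quadratic at $\pm 1$. The analytic substance is entirely packaged in \Cref{theorem:learning-rate-gradient-stability}; the remaining work is the elementary algebraic reduction described above.
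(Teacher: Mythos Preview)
Your proposal is correct and follows essentially the same route as the paper: apply \Cref{theorem:learning-rate-gradient-stability} to the iteration \Cref{eq:momentum-gradient}, compute the Jacobian at the fixed point, reduce by diagonalizing the Hessian to the family of quadratics $\mu^2-(1+\beta-\alpha\lambda)\mu+\beta=0$, and read off the eigenvalue bounds. The paper simply cites Polyak for the last step, whereas you spell out the Schur--Cohn argument and also make the verification of the hypotheses (invertibility of $D$ via $\det D=\beta^n$, semi-algebraicity of $g$) explicit; these additions are welcome but do not change the underlying strategy.
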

We review existing results related to \Cref{cor:application-gradient-descent}:

\emph{Avoidance of strict saddle points and generic convergence to minimizers:}  for the heavy ball method, this was documented for the continuous time ODE limit of the method \cite{goudou2009gradient}, and in discrete time for Lipschitz gradients under small step size conditions \cite{sun2019heavy,castera2021InertialNA}. Our result holds for generic step-sizes.

\emph{Large step sizes and small curvature:} if the iterates of \Cref{eq:momentum-gradient} converges, the limiting curvature is upper bounded by $2(1 + \beta)/\alpha$. This upper bound appeared in \cite[Equation 1, Theorem 2]{cohen2021gradient} for the quadratic case and we extend it to a general setting.

\subsection{Nesterov accelerated gradient method}
We further illustrate our result with Nesterov's accelerated gradient method (NAG) with fixed momentum constant $\beta$, the original version being described with decaying $\beta$ \cite{nesterov1983method}. Fixed momentum, however, is used frequently in the deep learning context, for example this corresponds to the Pytorch implementation. Its iteration update is given by:
% \begin{equation*}
%     \begin{aligned}
%         x_{k+1} &= x_k + \beta (x_k - y_k) - \alpha \nabla f(x_k + \beta (x_k - y_k))\\
%         y_{k+1} &= x_k
%     \end{aligned}
% \end{equation*}
% One can rewrite this update in the form of \Cref{eq:iterative-methods} as follows:
\begin{equation}
    \label{eq:nesterov-accelerated-gradient}
    \begin{pmatrix}
        x_{k+1} \\y_{k+1}
    \end{pmatrix} = 
    \underbrace{\begin{pmatrix}
        (1 + \beta) I & -\beta I \\
        I & {0}_{n \times n}
    \end{pmatrix}}_{D} \begin{pmatrix}
        x_k \\ y_k
    \end{pmatrix} - \alpha \underbrace{\begin{pmatrix}
        \nabla f(x_k + \beta(x_k - y_k)) \\ {0}
    \end{pmatrix}}_{g(\cdot)}
\end{equation}

\begin{proposition}[``Nesterov's accelerated gradient"\footnote{Here we adopt the ML community's terminology: `accelerated' refers to the ideal case when the loss is strongly convex.}]
    \label{cor:application-nesterov-gradient}
    Assume that $f$ is $C^2$, semi-algebraic and $0 < \beta < 1$ in \Cref{eq:momentum-gradient}. For almost every $\alpha > 0$ and $(x_0,y_0) \in \RR^n \times \RR^n$, we have: if $\{(x_k, y_k)\}_{k \in \NN}$ converges to some $(\bar{x}, \bar{y})$ , then all the eigenvalues $\lambda$ of $\nabla^2 f(\bar{x})$ satisfy:
    \begin{equation*}
        0 \leq \lambda \leq \frac{1}{\alpha} \left( \frac{2 + 2\beta}{1 + 2\beta}\right).
    \end{equation*}
\end{proposition}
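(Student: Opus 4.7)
The proof will be a direct application of \Cref{theorem:learning-rate-gradient-stability} once the Nesterov recursion is cast into the form \Cref{eq:iterative-methods} and the Jacobian spectrum at an equilibrium is computed. First I would verify the hypotheses of \Cref{theorem:learning-rate-gradient-stability}. The matrix $D$ has determinant $\beta \neq 0$ so it is invertible. The map $g(x,y)=(\nabla f(x+\beta(x-y)),0)$ is $C^1$ and semi-algebraic because $f$ is $C^2$ semi-algebraic, so the abstract theorem yields: for almost every $(\alpha,x_0,y_0)$, if the iterates converge to $(\bar x,\bar y)$ then $\rho(\jac G_\alpha(\bar x,\bar y))\le 1$.

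Next I would identify the fixed points. Writing $G_\alpha(\bar x,\bar y)=(\bar x,\bar y)$, the second block gives $\bar y=\bar x$, and the first block then reduces to $\alpha \nabla f(\bar x)=0$, so $\nabla f(\bar x)=0$. Setting $H=\nabla^2 f(\bar x)$ and using $\bar y=\bar x$, a short chain-rule computation yields
\begin{equation*}
\jac G_\alpha(\bar x,\bar x)=
\begin{pmatrix}
(1+\beta)(I-\alpha H) & -\beta(I-\alpha H)\\
I & 0
\end{pmatrix}.
\end{equation*}
Because $H$ is symmetric, an orthogonal diagonalization of $H$ block-decomposes this Jacobian: for each eigenvalue $\lambda$ of $H$ (with associated eigenvector $v$), the two-dimensional invariant subspace spanned by $(v,0)$ and $(0,v)$ carries the $2\times 2$ matrix
\begin{equation*}
M_\lambda=\begin{pmatrix}(1+\beta)(1-\alpha\lambda) & -\beta(1-\alpha\lambda)\\ 1 & 0\end{pmatrix},
\end{equation*}
so the spectrum of $\jac G_\alpha(\bar x,\bar x)$ is the union of the spectra of the $M_\lambda$'s. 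Writing $\mu=1-\alpha\lambda$, the characteristic polynomial is $t^2-(1+\beta)\mu\, t+\beta\mu$.

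The remaining work is to translate $\rho(M_\lambda)\le 1$ into a bound on $\lambda$. I would use the standard Schur--Cohn criterion for a real quadratic $t^2+bt+c$ to have both roots in the closed unit disk: $|c|\le 1$, $1+b+c\ge 0$, and $1-b+c\ge 0$. With $b=-(1+\beta)\mu$ and $c=\beta\mu$ these become $|\beta\mu|\le 1$, $1-\mu\ge 0$, and $1+(1+2\beta)\mu\ge 0$, i.e.\ $-\tfrac{1}{1+2\beta}\le\mu\le 1$ (the condition $|\beta\mu|\le 1$ is then automatic since $0<\beta<1$). Substituting back $\mu=1-\alpha\lambda$ gives exactly
\begin{equation*}
0\;\le\;\alpha\lambda\;\le\;\frac{2+2\beta}{1+2\beta},
\end{equation*}
which is the advertised bound.

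The only non-routine step is the $2\times 2$ stability analysis, where one has to be attentive that the matrix is not symmetric so eigenvalues may be complex; the Schur--Cohn approach handles both cases uniformly, which is why I would favor it over a direct root computation. Everything else is a mechanical verification of the hypotheses of \Cref{theorem:learning-rate-gradient-stability} and a fixed-point/Jacobian calculation.
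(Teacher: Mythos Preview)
Your proof is correct and follows essentially the same route as the paper: apply \Cref{theorem:learning-rate-gradient-stability}, identify the fixed points $\bar y=\bar x$, $\nabla f(\bar x)=0$, compute the Jacobian, block-diagonalize via the eigendecomposition of $H$, and analyze the resulting $2\times2$ characteristic polynomial $t^2-(1+\beta)\mu\,t+\beta\mu$. The only (minor) difference is that the paper carries out the $2\times2$ stability analysis by an explicit case split on the sign of the discriminant, whereas your use of the Schur--Cohn criterion handles the real and complex cases uniformly and is arguably cleaner; one cosmetic slip is that $\det D=\beta^n$ rather than $\beta$, but this does not affect invertibility.
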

An existing line of works \cite{jin2018accelerated,agrawal2017finding,carmon2018accelerated} exploits the ideas of Nesterov's method to investigate the complexity of finding approximate second order critical points. The nature of our results is different: the method generically avoids strict saddle points and filters eigenvalues more strongly as $\beta$ grows. In addition, the upper bound on eigenvalues was described for the quadratic case \cite[Appendix B]{cohen2021gradient}.\\
\emph{Generic convergence and eigenvalues filtering}: %Note that, unlike Heavy Ball, this algorithm enjoys generic convergence to minimizers and has an eigenvalue upper bound in $4/3\alpha$ when $\beta$ approaches $1$ which is a slight improvement over gradient descent. We shall not however test it in the next section, as it is rather uncommon to use such methods in Deep Learning but it is certainly worthwhile examining its empirical properties in other settings.
This algorithm shares the property of generic convergence to minimizers with the gradient descent~(\ref{eq:gradient-descent}), and the Heavy Ball method~(\ref{eq:momentum-gradient}). Observe that its  filtering abilities are slightly  improved over the gradient descent ($2/\alpha$) and the Heavy Ball method ($4/\alpha$), as the eigenvalue upper bound tends to $4/(3\alpha)$ when $\beta$ approaches $1$.%, which is smaller than those of gradient descent ($2/\alpha$) and heavy ball methods ($4/\alpha$).

\subsection{Unnormalized Sharpness Aware Minimization (USAM)}
USAM was introduced and studied in \cite{andriushchenko22towards}. Its iteration update is given by:
\begin{equation}
    \label{eq:unnormalized-sam}
    x_{k+1} = x_k - \alpha \nabla f(x_k + \rho \nabla f(x_k))
\end{equation}
for some constant $\rho > 0$. This is a modified version of the original SAM \cite{foret2021sharpnessaware}, given by:
\begin{equation}
    \label{eq:normalized_sam}
    x_{k+1} = x_k - \alpha \nabla f\left(x_k + \rho \frac{\nabla f(x_k)}{\|\nabla f(x_k)\|}\right)
\end{equation}

We focus on \Cref{eq:unnormalized-sam} because the update rule of \Cref{eq:normalized_sam} is not $C^1$, it is actually discontinuous around critical points. USAM is a modified version of gradient descent, $\nabla f(x_k)$ being simply replaced by $\nabla f(x_k + \rho \nabla f(x_k))$. In practice, this is combined with other techniques such as heavy ball momentum as follows:
\begin{equation}
    \label{eq:unnormalized-momentum-sam}
    \begin{aligned}
        x_{k+1} &= x_k + \beta (x_k - x_{k-1}) - \alpha \nabla f(x_k + \rho \nabla f(x_k))
    \end{aligned}
\end{equation}
or equivalently,
\begin{equation}
    \label{eq:unnormalized-momentum-sam-standard-form}
    \begin{pmatrix}
        x_{k+1} \\y_{k+1}
    \end{pmatrix} = 
    \underbrace{\begin{pmatrix}
        (1 + \beta) I & -\beta I \\
        I & {0}_{n \times n}
    \end{pmatrix}}_{D} \begin{pmatrix}
        x_k \\ y_k
    \end{pmatrix} - \alpha \underbrace{\begin{pmatrix}
        \nabla f(x_k + \rho\nabla f(x_k)) \\ \mathbf{0}
    \end{pmatrix}}_{g(\cdot)}
\end{equation}
to conform with the standard form of \Cref{eq:iterative-methods}.

\begin{proposition}[USAM + Heavy Ball momentum]
    \label{cor:application-usam-momentum}
    Assume that $f$ is $C^2$, semi-algebraic and $0 \leq \beta < 1$ in \Cref{eq:unnormalized-momentum-sam}. For almost every $\alpha > 0$ and $(x_0,y_0) \in \RR^n \times \RR^n$, we have: if $\{(x_k, y_k)\}_{k \in \NN}$ converges to some $(\bar{x}, \bar{y})$ where $\nabla f(\bar{x}) = 0$, then all the eigenvalues $\lambda$ of $\nabla^2 f(\bar{x})$ satisfy:
    \begin{equation*}
        0 \leq \lambda(1 + \rho \lambda) \leq \frac{2(1 + \beta)}{\alpha}.
    \end{equation*}
    or equivalently,
    \begin{equation*}
        \frac{-1 - \sqrt{1 + 8(1+\beta)\rho/\alpha}}{2\rho} \leq \lambda \leq -\frac{1}{\rho} \quad \text{or} \quad 0 \leq \lambda \leq \frac{\sqrt{1 + 8(1 + \beta)\rho/\alpha}-1}{2\rho}.
    \end{equation*}
\end{proposition}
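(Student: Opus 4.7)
The plan is to apply \Cref{theorem:learning-rate-gradient-stability} to the standard-form recursion \Cref{eq:unnormalized-momentum-sam-standard-form} at a fixed point $(\bar x,\bar y)$ with $\nabla f(\bar x)=0$, and then translate the spectral-radius bound $\rho(\jac G_\alpha(\bar x,\bar y))\le 1$ into an explicit inequality on the eigenvalues of $H:=\nabla^2 f(\bar x)$. Verifying the hypotheses of the theorem is immediate: the map $g(x,y)=(\nabla f(x+\rho\nabla f(x)),0)$ is $C^1$ semi-algebraic as a composition of $C^1$ semi-algebraic maps, and a Schur-complement computation yields $\det D = \beta^n$, so $D$ is invertible whenever $\beta>0$. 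The degenerate case $\beta=0$ collapses to the single-block recursion \Cref{eq:unnormalized-sam} on $\RR^n$ with $D=I$, to which the theorem applies directly.

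At a fixed point $(\bar x,\bar y)$ with $\nabla f(\bar x)=0$, the second block of the update forces $\bar y=\bar x$ and the ``ascent'' point $\bar x+\rho\nabla f(\bar x)$ collapses to $\bar x$. A direct chain-rule computation using $\nabla f(\bar x)=0$ then gives $\jac_x[\nabla f(x+\rho\nabla f(x))]\big|_{x=\bar x} = H(I+\rho H)$, so that
\[
\jac G_\alpha(\bar x,\bar y) \;=\; \begin{pmatrix} (1+\beta)I - \alpha H(I+\rho H) & -\beta I \\ I & 0 \end{pmatrix}.
\]
All four blocks are polynomials in $H$ and hence commute pairwise; using the symmetry of $H$ one simultaneously diagonalizes the whole system along the spectral decomposition of $H$, and the characteristic polynomial of $\jac G_\alpha(\bar x,\bar y)$ factors into $n$ independent quadratics $\mu^2 - (1+\beta-\alpha\lambda(1+\rho\lambda))\mu + \beta$, one per eigenvalue $\lambda$ of $H$.

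It remains to characterize when both roots of $\mu^2 -b\mu + c$ with $b=1+\beta-\alpha\lambda(1+\rho\lambda)$ and $c=\beta$ lie in the closed unit disk. The Schur--Cohn criterion gives the equivalent conditions $|c|\le 1$ and $|b|\le 1+c$; since $\beta\in[0,1)$ the first is automatic and the second rearranges exactly to $0\le \alpha\lambda(1+\rho\lambda)\le 2(1+\beta)$. Combined with \Cref{th:abstract}, which furnishes $\rho(\jac G_\alpha(\bar x,\bar y))\le 1$ for almost every $(\alpha,x_0,y_0)$, this proves the first inequality. Solving the quadratic $\rho\lambda^2+\lambda-2(1+\beta)/\alpha\le 0$ under the sign constraint $\lambda(1+\rho\lambda)\ge 0$ then yields the two intervals displayed in the conclusion.

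The only subtle point is the application of the Schur--Cohn criterion: one must include the boundary case $|\mu|=1$, which matches the non-strict conclusion $\rho(\jac G_\alpha)\le 1$ provided by \Cref{th:abstract}, and one must explicitly invoke the symmetry of $H$ to commute and simultaneously diagonalize all four blocks of $\jac G_\alpha(\bar x,\bar y)$ before applying the criterion eigenvalue by eigenvalue. Everything else is a routine specialization of \Cref{theorem:learning-rate-gradient-stability} to the particular $(D,g)$ underlying USAM with Heavy-Ball momentum.
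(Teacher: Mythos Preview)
Your proposal is correct and follows essentially the same route as the paper: apply \Cref{theorem:learning-rate-gradient-stability} to the standard-form recursion, compute the Jacobian at $(\bar x,\bar y)$ using $\nabla f(\bar x)=0$, block-diagonalize along the eigenbasis of $H$, and reduce the spectral-radius condition to the quadratic $\mu^2-(1+\beta-\alpha\lambda(1+\rho\lambda))\mu+\beta=0$ already analyzed in the Heavy-Ball case. The paper's own proof is terser, doing only $\beta=0$ explicitly and pointing to \Cref{cor:application-momentum-gradient} for the extension; your version spells out two points the paper leaves implicit, namely the invertibility of $D$ (with $\det D=\beta^n$, forcing a separate treatment of $\beta=0$ via the $n$-dimensional recursion \Cref{eq:unnormalized-sam}) and the precise Schur--Cohn criterion $|c|\le 1$, $|b|\le 1+c$ in place of the paper's reference to Polyak. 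These are welcome clarifications rather than a different argument.
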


Let us discuss the implications \Cref{cor:application-usam-momentum} in light of existing literature:

\emph{Strict saddle points may be attractive}: USAM with or without momentum does not  avoid strict saddle points generically. For example, if $f(x) = -\frac{1}{\rho} x^2$ and $0 < \alpha < \rho$, then $(0,0)$ is a stable fixed point since one can prove that the update in \Cref{eq:unnormalized-momentum-sam-standard-form} is locally a contraction at $(0,0)$. This was remarked in \cite[Theorem 1]{kim2023stability} for the ODE version of \Cref{eq:unnormalized-sam} and is seen in \Cref{cor:application-usam-momentum} with the negative interval.

 \emph{Apparition of new fixed points}: The set of fixed points of the USAM dynamics in \Cref{eq:unnormalized-momentum-sam} is given by
\[
\{x:\ x + \rho\nabla f(x)\in \crit f\}\times\{0\}\supset \crit f\times \{0\}.
\]
Thus, the set of fixed points of the USAM algorithm is possibly stricly larger than the set of critical points of the underlying loss function. As shown in \Cref{appearance}, the set of fixed points of the USAM algorithm not belonging to $\crit f$ may even have a nonempty interior. 
This remark combined with the previous one on strict saddle points illustrate that, in full generality, the USAM algorithm does not enjoy the property of generic convergence to local minizers of the objective $f$, contrary to the gradient or heavy ball algorithms. In the context of deep learning, we observe however that the USAM algorithm has a minimizing behavior, suggesting that the spurious fixed points have a limited impact.

\emph{Eigenvalues filtering}: The upper bound in \Cref{cor:application-usam-momentum} is smaller than that of \Cref{cor:application-gradient-descent} for any $\rho > 0$. Both the upper and lower bounds are of order $1/ \sqrt{\alpha \rho}$ as $\rho \to \infty$. The upper bound in \Cref{cor:application-usam-momentum} was described in \cite{zhou2025sam} for a simplified version of USAM. Moreover, for a fixed $\rho>0$, as $\alpha \to 0$, these bounds scale like $O(1/\sqrt{\alpha})$, while for previous methods the upper bound scales like $O(1/\alpha)$. These observations suggest that USAM may converge to flatter critical points of the objective.

\subsection{Two variants of USAM finding flat minimizers}
We investigate two variations on USAM which result in finer constraints on asymptotic curvature. 
For both, for a fixed SAM parameter $\rho$, the upper bound scales like $\alpha^{-1/3}$ as $\alpha \to 0$, which is smaller the one found for USAM.

\subsubsection{Two-step USAM}
The following update performs two gradient ascent steps:
\begin{equation}
    \label{eq:unnormalized-sam-sam}
    x_{k+1} = x_k - \alpha \nabla f(\underbrace{x_k + \rho \nabla f(x_k) + \rho \nabla f(x_k + \rho \nabla f(x_k))}_{\text{two gradient ascent steps}})
\end{equation}
for some constant $\rho > 0$. Combining with heavy ball momentum gives:
\begin{equation}
    \label{eq:momentum-usam}
    \begin{pmatrix}
        x_{k+1} \\y_{k+1}
    \end{pmatrix} = 
    {\begin{pmatrix}
        (1 + \beta) I & -\beta I \\
        I & {0}_{n \times n}
    \end{pmatrix}} \begin{pmatrix}
        x_k \\ y_k
    \end{pmatrix} - \alpha \begin{pmatrix}
        \nabla f(x_k + \rho \nabla f(x_k) + \rho \nabla f(x_k + \rho \nabla f(x_k))) \\ \mathbf{0}
    \end{pmatrix} 
\end{equation}

\begin{proposition}[Two-step USAM gradient]
    \label{cor:application-usam-momentum-two}
    Assume that $f$ is $C^2$, semi-algebraic, $\rho > 0$ and $0 \leq \beta < 1$ in \Cref{eq:momentum-usam}. For almost every $\alpha > 0$ and $(x_0,y_0) \in \RR^n \times \RR^n$, we have: if $\{(x_k,y_k)\}_{k \in \NN}$ converges to $(\bar{x}, \bar{y})$ where $\bar{x}$ is a critical point of $f$, i.e., $\nabla f(\bar x)=0$, then all the eigenvalues of $\lambda$ of $\nabla^2 f(\bar{x})$ satisfy:
    \begin{equation*}
        0 \leq \lambda (1 + \rho \lambda)^2 \leq \frac{2(1 + \beta)}{\alpha}.
    \end{equation*}
\end{proposition}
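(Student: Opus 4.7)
The plan is to cast the recursion~\Cref{eq:momentum-usam} into the standard form~\Cref{eq:iterative-methods} on state $(x,y)\in\RR^{2n}$, with matrix $D$ as written and
\[
g(x,y)=\bigl(\nabla f\bigl(x+\rho\nabla f(x)+\rho\nabla f(x+\rho\nabla f(x))\bigr),\ 0\bigr),
\]
which is $C^1$ and semi-algebraic as a composition of such maps. Then \Cref{theorem:learning-rate-gradient-stability} applies directly: for almost every step size $\alpha$ and almost every initialization, whenever the iterates converge to some fixed point $(\bar x,\bar y)$, the spectral radius of $\jac G_\alpha(\bar x,\bar y)$ is at most $1$. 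What remains is to unfold this spectral bound into the stated inequality on the eigenvalues of $H:=\nabla^2 f(\bar x)$.

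Next I would use the critical-point hypothesis $\nabla f(\bar x)=0$ to simplify the nested ascent steps. Writing $h_1(x):=x+\rho\nabla f(x)$ and $h_2(x):=h_1(x)+\rho\nabla f(h_1(x))$, one has $h_1(\bar x)=h_2(\bar x)=\bar x$, and the chain rule gives
\[
\partial h_1(\bar x)=I+\rho H,\qquad \partial h_2(\bar x)=(I+\rho H)+\rho H(I+\rho H)=(I+\rho H)^2,
\]
so that the Jacobian of $\nabla f\circ h_2$ at $\bar x$ equals $H(I+\rho H)^2$. Since $g$ does not depend on $y$ and the fixed-point condition forces $\bar y=\bar x$, one obtains
\[
\jac G_\alpha(\bar x,\bar y)=\begin{pmatrix}(1+\beta)I-\alpha H(I+\rho H)^2 & -\beta I\\ I & 0_{n\times n}\end{pmatrix}.
\]

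Finally I would diagonalize the symmetric matrix $H$ in an orthonormal basis, which block-diagonalizes $\jac G_\alpha(\bar x,\bar y)$ into $2\times 2$ blocks indexed by the eigenvalues $\lambda$ of $H$; setting $\mu:=\lambda(1+\rho\lambda)^2$, each block has characteristic polynomial $t^2-((1+\beta)-\alpha\mu)\,t+\beta$. Since $0\le\beta<1$, the Schur--Cohn criterion (or a direct discriminant analysis) shows that both roots lie in the closed unit disk if and only if $|(1+\beta)-\alpha\mu|\le 1+\beta$, which is exactly $0\le\mu\le 2(1+\beta)/\alpha$, yielding the announced inequality. The mildly delicate part of the argument is the computation of $\partial h_2(\bar x)$: it is precisely the assumption $\nabla f(\bar x)=0$ that collapses the three evaluation points $x$, $h_1(x)$, $h_2(x)$ to $\bar x$ and produces the clean factorization $(I+\rho H)^2$; without the critical-point hypothesis one would be left with Hessians evaluated at three distinct points and no such tidy eigenvalue rewriting would be available.
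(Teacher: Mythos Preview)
Your argument is correct and follows the same route as the paper: compute the Jacobian of the recursion at a critical fixed point, observe that the two-step ascent collapses to $H(I+\rho H)^2$ thanks to $\nabla f(\bar x)=0$, and then read off the spectral condition. You are in fact more explicit than the paper, which only treats $\beta=0$ and defers the momentum case to the Heavy Ball computation in \Cref{cor:application-momentum-gradient}; your direct $2\times 2$ block reduction and Schur--Cohn argument handle all $\beta\in[0,1)$ at once.

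One small caveat: at $\beta=0$ the matrix $D=\begin{pmatrix}I&0\\ I&0\end{pmatrix}$ is singular, so \Cref{theorem:learning-rate-gradient-stability} cannot be invoked on the $(x,y)$-dynamics as written. In that case the $y$-component is a passive copy of $x$, and one applies the theorem to the $x$-recursion alone (with $D=I$), exactly as the paper does; your Schur--Cohn computation then degenerates to $|1-\alpha\mu|\le 1$ and the same conclusion holds.
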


\begin{remark}[Convergence \& improved eigenvalues filtering]
    \label{rem:two-step-usam-vs-usam}
   Contrary to USAM, Two-step USAM avoids strict saddle points generically as the interval given in \Cref{cor:application-usam-momentum-two} does not allow for negative $\lambda$. Yet the nonempty interior argument of \Cref{appearance} maybe adapted and, for simple costs, the algorithm may have many spurious fixed points, which do not correspond to critical points of the objective. As for USAM, empirical results suggest that they have a limited effect on the minimizing behavior of the algorithm in deep learning.\\
   As for eigenvalue filtering, the result is rather positive; if USAM and two-step USAM have the same common hyperparameters, we infer that, within $\crit f$, stable fixed points for Two-step USAM are also stable for USAM (the converse being not necessarily true). This suggests that Two-step USAM can find flatter local minima.
\end{remark}

\subsubsection{Hessian USAM}

We consider the following iteration update:
\begin{equation}
    \label{eq:unnormalized-hsam}
    x_{k+1} = x_k - \alpha \nabla f(x_k + \rho \nabla^2 f(x_k) \nabla f(x_k)),
\end{equation}
replacing $\nabla f(x_k)$ as in \Cref{eq:gradient-descent} by $\nabla f(x_k + \rho \nabla^2 f(x_k)\nabla f(x_k))$. Combining with heavy ball momentum gives:
\begin{equation}
    \label{eq:momentum-hsam}
    \begin{pmatrix}
        x_{k+1} \\y_{k+1}
    \end{pmatrix} = 
    {\begin{pmatrix}
        (1 + \beta) I & -\beta I \\
        I & {0}_{n \times n}
    \end{pmatrix}} \begin{pmatrix}
        x_k \\ y_k
    \end{pmatrix} - \alpha {\begin{pmatrix}
        {\nabla f(x_k + \rho \nabla^2 f(x_k) \nabla f(x_k))} \\ \mathbf{0}
    \end{pmatrix}}
\end{equation}

\begin{proposition}[Hessian USAM gradient]
    \label{cor:application-hsam}
    Assume that $f$ is $C^2$, semi-algebraic, $\rho > 0$ and $0 \leq \beta < 1$ in \Cref{eq:momentum-usam}. For almost every $\alpha > 0$ and $(x_0,y_0) \in \RR^n \times \RR^n$, we have: if $\{(x_k,y_k)\}_{k \in \NN}$ converges to $(\bar{x}, \bar{y})$ where $\bar{x}$ is a critical point of $f$, i.e., $\nabla f(\bar x)=0$, then all the eigenvalues of $\lambda$ of $\nabla^2 f(\bar{x})$ satisfy:
    \begin{equation*}
        0 \leq \lambda (1 + \rho \lambda^2) \leq \frac{2(1+\beta)}{\alpha}.
    \end{equation*}
\end{proposition}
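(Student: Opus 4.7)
The plan is to apply \Cref{theorem:learning-rate-gradient-stability} directly to the iteration map $G_\alpha$ coming from \Cref{eq:momentum-hsam}, exactly as done for USAM and Two-step USAM. First I would check that the setup fits: the matrix $D$ is the heavy-ball block, invertible whenever $\beta \neq 0$ (the boundary case $\beta=0$ is handled in the same way by working in the $x$-coordinate alone), and the map $g(x,y)=(\nabla f(x+\rho\nabla^2 f(x)\nabla f(x)),0)$ is semi-algebraic whenever $f$ is. One regularity subtlety arises here that is not present for ordinary USAM: the $C^1$-ness of $g$ requires $\nabla^2 f$ to be $C^1$, i.e.\ $f$ to be $C^3$. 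Either one strengthens the assumption to $C^3$ (which is harmless for the losses considered in the experiments), or one notes that only local $C^1$-ness near critical fixed points is needed and, at $\bar x$ with $\nabla f(\bar x)=0$, a direct Taylor expansion shows that $x\mapsto \nabla^2 f(x)\nabla f(x)$ is Fréchet differentiable at $\bar x$ with derivative $\nabla^2 f(\bar x)^2$; a smooth cut-off / local modification argument would then make $g$ globally $C^1$ without changing the relevant fixed points.

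The core computation is the Jacobian of $G_\alpha$ at a fixed point $(\bar x,\bar x)$ with $\nabla f(\bar x)=0$. Setting $F(x)=\nabla f(x+\rho\nabla^2 f(x)\nabla f(x))$ and $H=\nabla^2 f(\bar x)$, the chain rule together with $\nabla f(\bar x)=0$ kills the term involving the derivative of $\nabla^2 f$, leaving $\jac F(\bar x)=H(I+\rho H^2)=H+\rho H^3$. Therefore
\begin{equation*}
    \jac G_\alpha(\bar x,\bar x)=\begin{pmatrix}(1+\beta)I-\alpha(H+\rho H^3) & -\beta I\\ I & 0\end{pmatrix}.
\end{equation*}
Diagonalizing $H$ in an orthonormal eigenbasis decouples the $2n\times 2n$ block matrix into $n$ scalar $2\times2$ blocks, one for each eigenvalue $\lambda$ of $H$, of the form $\bigl(\begin{smallmatrix}(1+\beta)-\alpha\lambda(1+\rho\lambda^2) & -\beta \\ 1 & 0\end{smallmatrix}\bigr)$, with characteristic polynomial $p(\mu)=\mu^2-[(1+\beta)-\alpha\lambda(1+\rho\lambda^2)]\mu+\beta$.

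By \Cref{theorem:learning-rate-gradient-stability}, for almost every $\alpha$ and $(x_0,y_0)$, convergence to $(\bar x,\bar x)$ forces the spectral radius of $\jac G_\alpha(\bar x,\bar x)$ to be at most $1$, i.e.\ both roots of $p$ lie in the closed unit disk. Applying the Jury/Schur test for a real quadratic $\mu^2+a\mu+b$ (namely $p(1)\geq 0$, $p(-1)\geq 0$, and $|b|\leq 1$) yields $\alpha\lambda(1+\rho\lambda^2)\geq 0$, $\alpha\lambda(1+\rho\lambda^2)\leq 2(1+\beta)$, while $|b|=\beta<1$ is automatic. Since $1+\rho\lambda^2>0$, these reduce to $0\leq \lambda(1+\rho\lambda^2)\leq 2(1+\beta)/\alpha$, which is the claimed filtering inequality. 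The main obstacle I anticipate is the regularity point discussed above: delivering the $C^1$ hypothesis required by \Cref{theorem:learning-rate-gradient-stability} without inflating $f$'s regularity beyond $C^2$. Everything else is a parallel of the USAM and Two-step USAM arguments, with the only genuinely new ingredient being the computation $\jac F(\bar x)=H+\rho H^3$ that produces the cubic factor $1+\rho\lambda^2$ in the conclusion.
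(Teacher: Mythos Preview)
Your approach matches the paper's, whose proof is literally ``Similar to \Cref{cor:application-usam-momentum-two}'': apply \Cref{theorem:learning-rate-gradient-stability}, compute the Jacobian of the update at a critical fixed point to get $H(I+\rho H^2)$, and reduce the spectral-radius bound to the scalar inequality via the $2\times2$ heavy-ball blocks (your Jury/Schur test is precisely the computation the paper defers to \Cref{cor:application-momentum-gradient}). The regularity issue you flag is genuine and the paper does not address it either: with only $f\in C^2$ the map $x\mapsto\nabla^2 f(x)\nabla f(x)$ need not be $C^1$, so strictly speaking the hypotheses of \Cref{theorem:learning-rate-gradient-stability} call for $f\in C^3$ here; your Fr\'echet-differentiability-at-$\bar x$ observation is correct, but the cut-off workaround is delicate because the globalization step (\Cref{lemma:gradient-update-good}) uses $C^1$ regularity of $g$ along the whole orbit, not only near the limit.
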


\begin{remark}[Convergence \& improved eigenvalues filtering]
    \label{rem:hessian-usam-vs-usam}
 Like Two-step USAM (\Cref{rem:two-step-usam-vs-usam}), Hessian USAM avoids strict saddle points. However, it may also fail to achieve generic convergence to local minimizers because spurious fixed points may generate stable points out of $\crit f$, recall \Cref{appearance}. Once again, its eigenvalue-filtering properties are generally improved over USAM.\end{remark}

\section{Experiments}

\begin{figure}
    \centering
    \includegraphics[width=0.45\linewidth]{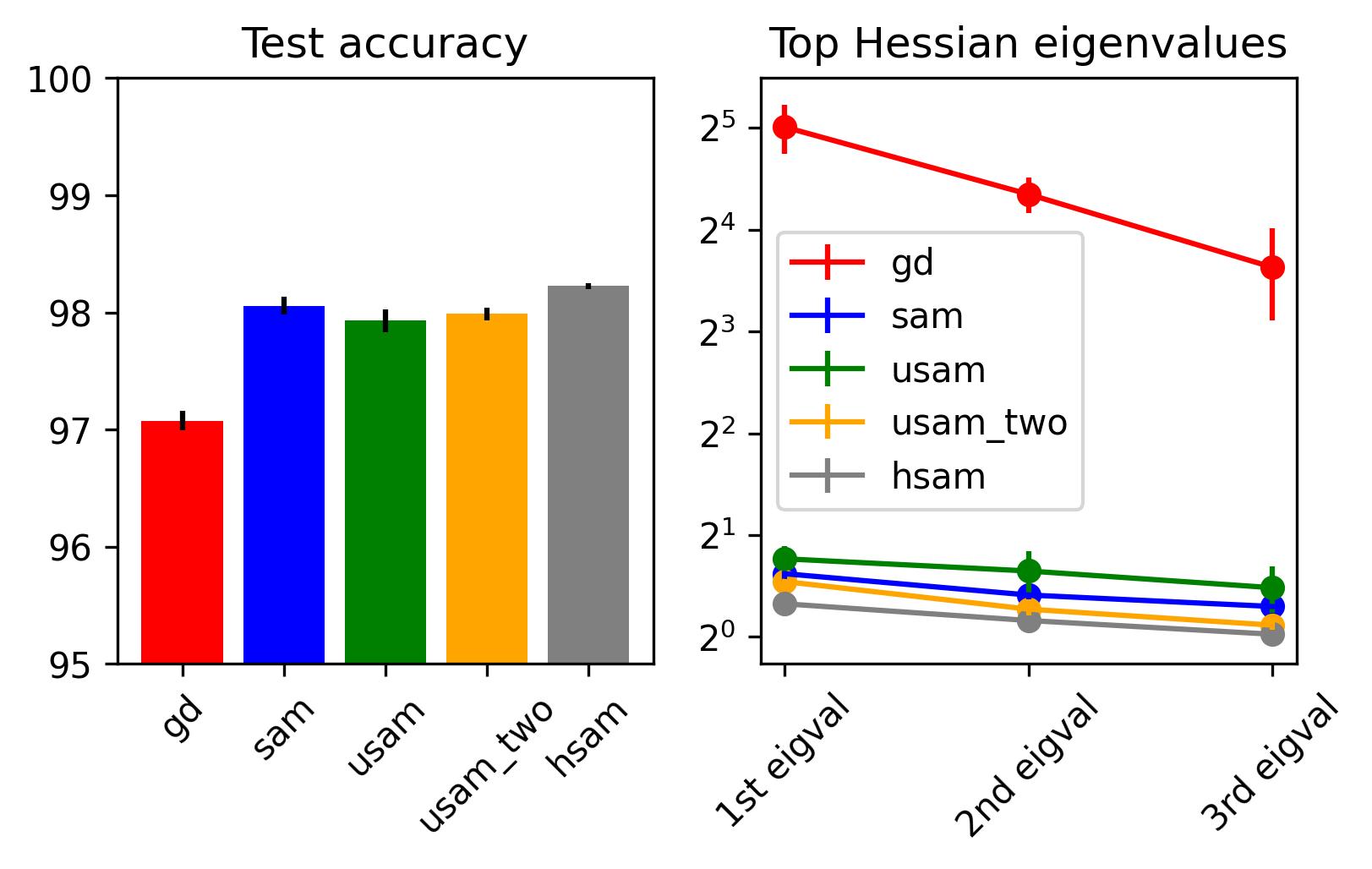} \quad \includegraphics[width=0.45\linewidth]{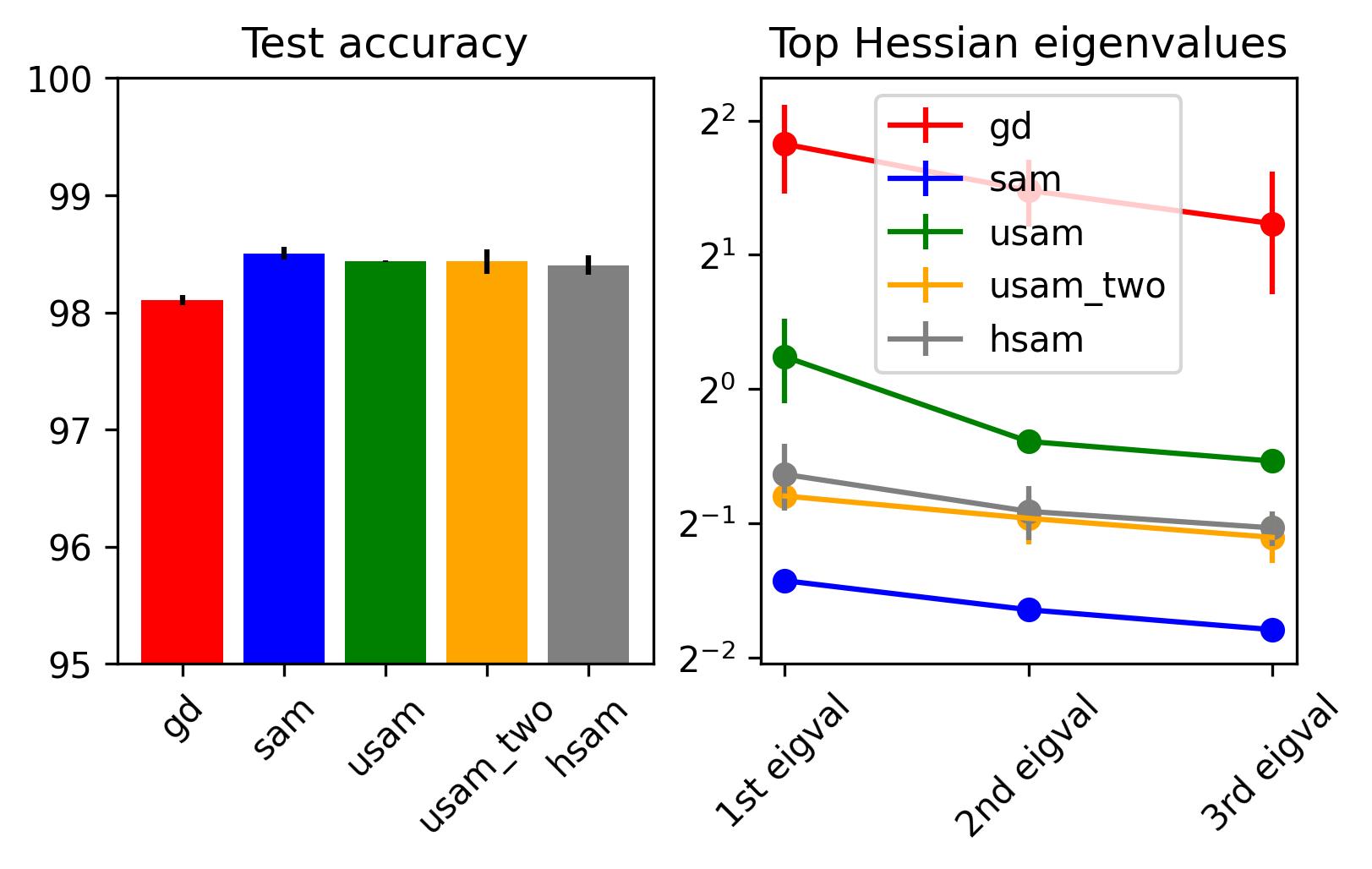}
    \caption{(\textbf{Experiment 1}) - MLP trained on MNIST with \emph{stochastic gradient descent} and its corresponding to SAM, USAM, USAM2 and Hessian USAM versions. Left without momentum, right with $\beta = 0.9$. SAM, USAM and USAM2 are trained with $\rho \in \{0.05, 0.1, 0.2\}$ while Hessian USAM is trained with $\rho \in \{0.01, 0.02, 0.05, 0.1, 0.2\}$. Among these $\rho$, we choose those yielding the best models (in terms of test accuracy) and report their accuracy and hessian spectra.}
    \label{fig:mnist-sgd}
\end{figure}

In this section, we evaluate numerically limiting curvature at equilibrium for the considered algorithms, in the context of neural networks training. In our experiments, we compare in particular the popular (stochastic) gradient descent and heavy-ball method with their corresponding USAM, Two-step USAM, and HSAM versions to observe the effect of eigenvalues filtering.
%its corresponding SAM algorithms replace $\nabla f(\cdot)$ in \Cref{eq:momentum-gradient} by $\nabla f(\cdot - \rho \nabla f(\cdot))$ in \Cref{eq:unnormalized-sam},  $\nabla f(\cdot + \rho \nabla f(\cdot) + \rho \nabla f(\cdot + \rho \nabla f(\cdot))$ in \Cref{eq:unnormalized-sam-sam} and $\nabla f(\cdot + \rho \nabla^2 f(\cdot) \nabla f(\cdot))$ in \Cref{eq:unnormalized-hsam} respectively. 
We do not implement the Nesterov algorithm and its SAM variants since they are less commonly used in the neural networks training context.
%We present several experiments to illustrate an important implication of our results: in comparison to vanilla optimization methods such as gradient descent and the heavy ball method, add-up operations in SAM (cf. \Cref{eq:normalized_sam}), USAM (cf. \Cref{eq:unnormalized-sam}), USAM2 (cf. \Cref{eq:unnormalized-sam-sam} and Hessian USAM (cf. \Cref{eq:unnormalized-hsam}) can only converge to critical points of smaller curvature (provided that these algorithms use the same learning rate $\alpha$ and the momentum $\beta$). 

We conducted three neural network training experiments described below; our Python implementation is available at \url{a_public_Github_repo_after_anonymous_review} for reproduction purposes. The datasets, architectures and protocols are as follows:
\begin{enumerate}[leftmargin=*]
    \item \textbf{MNIST dataset and MultiLayer Perceptron (MLP)}: The dimensions of hidden layers are $\{128, 64, 10, 10\}$, with ReLU activation function. We use the standard cross-entropy loss for classification.
    
    We consider GD, SAM, USAM, USAM2 and Hessian SAM with ($\beta = 0.9$) and without $(\beta = 0$) momentum. We fix a $128$ minibatch size, an $\alpha = 0.01$ learning rate a weight decay of $5e-4$. The parameter $\rho$ is tuned from grid search: for SAM variants, we tune the hyperparameter $\rho \in \{10^{-i}, 2 \times 10^{-i}, 5 \times 10^{-i} \mid i \geq 1, i \in \NN\}$. We consider the best run in terms of accuracy among the three largest values of $\rho$ such that the training does not fail. We report the corresponding three largest eigenvalues of the Hessian matrix of the training loss after training. The training is repeated three times for each set of hyperparameters. The results are illustrated in \Cref{fig:mnist-sgd}.

    \item \textbf{MNIST-fashion dataset and MultiLayer Perceptron (MLP)}: We use the same setting as in the first experiment, except replacing the MNIST dataset with the MNIST-fashion dataset \cite{xiao2017fashion}. The results are illustrated in \Cref{fig:mnist-fashion-sgd}.

    \item \textbf{CIFAR10 dataset and WideResNet-16-8}: The third experiment consists of training a WideResNet-16-8 (\cite{zagoruyko2016WRN}) without batch normalization layers with CIFAR10 dataset. This specification echoes the remark from \cite[Section 4.2]{foret2021sharpnessaware} whose authors suggest that batch normalization layers tend to ``obscure interpretation of the Hessian''. Our choice of WideResNet architecture is motivated by previous experiments reporting succesful training without batch normalization. We consider the momentum version of GD, SAM, USAM, USAM2 and Hessian SAM with the same value of $\rho = 0.001$. The results are shown in \Cref{fig:wideresnet-cifar10}.
\end{enumerate}

\begin{figure}
    \centering
    \includegraphics[width=0.45\linewidth]{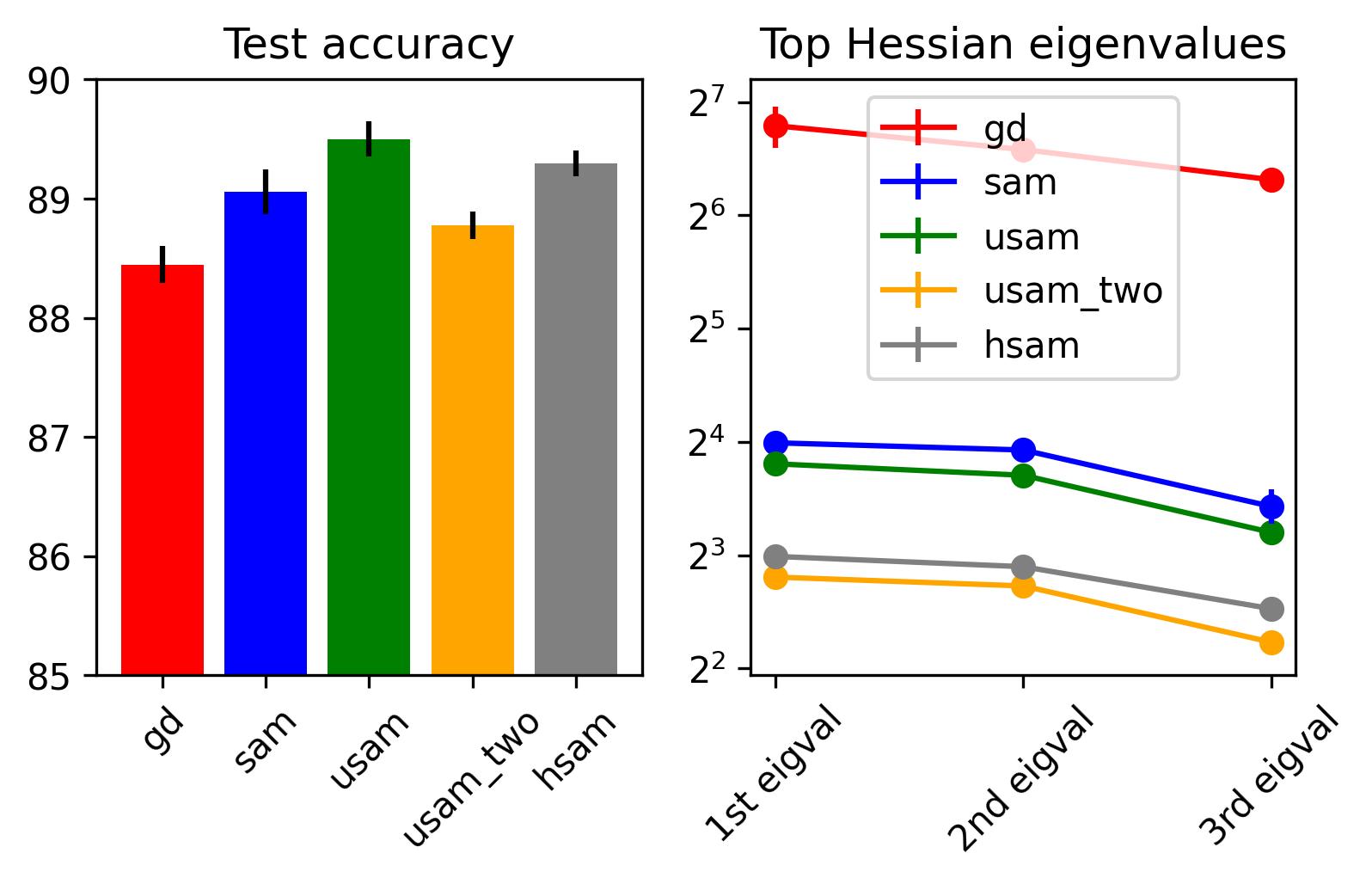} \quad \includegraphics[width=0.45\linewidth]{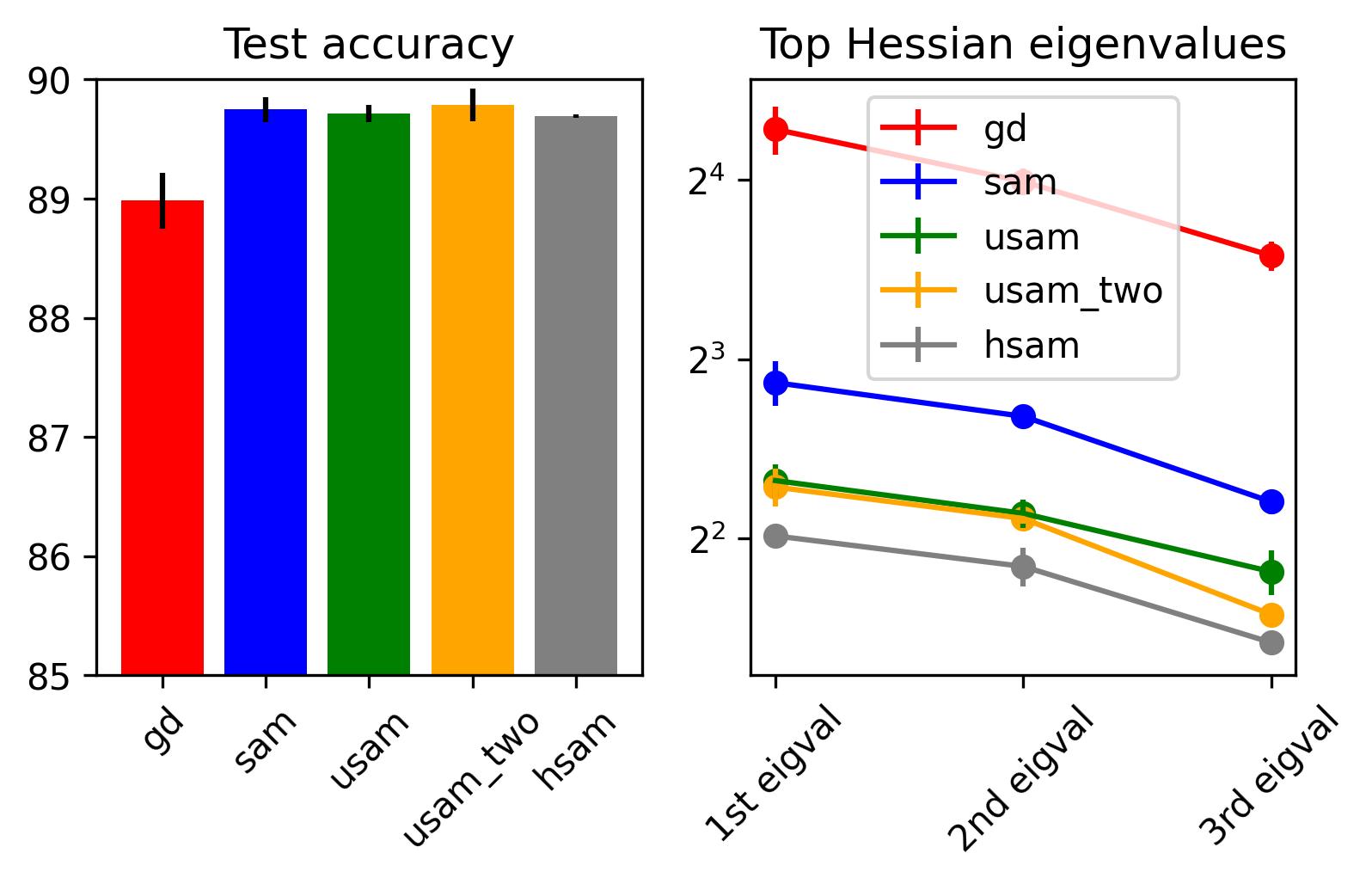}
    \caption{(\textbf{Experiment 2}) - Same as \Cref{fig:mnist-sgd} with the MNIST-FASHION dataset.} 
    \label{fig:mnist-fashion-sgd}
\end{figure}

% \begin{figure}
%     \centering
%     \includegraphics[width=0.8\linewidth]{images/}
%     \caption{(\textbf{Experiment 3}) - Models are trained with \emph{stochastic gradient descent} and its corresponding to SAM, USAM, USAM2 and Hessian USAM versions. The values of $\rho$ vary heavily in this case. More specifically, SAM: $\{0.05 ,0.1, 0.2\}$, USAM: $\{0.02, 0.05, 0.1\}$, USAM2: $\{0.002, 0.005, 0.01\}$ and HSAM: $\{0.0002, 0.0005, 0.001\}$.}
%     \label{fig:cifar10-resnet}
% \end{figure}

\begin{figure}
    \centering
    \includegraphics[width=0.8\linewidth]{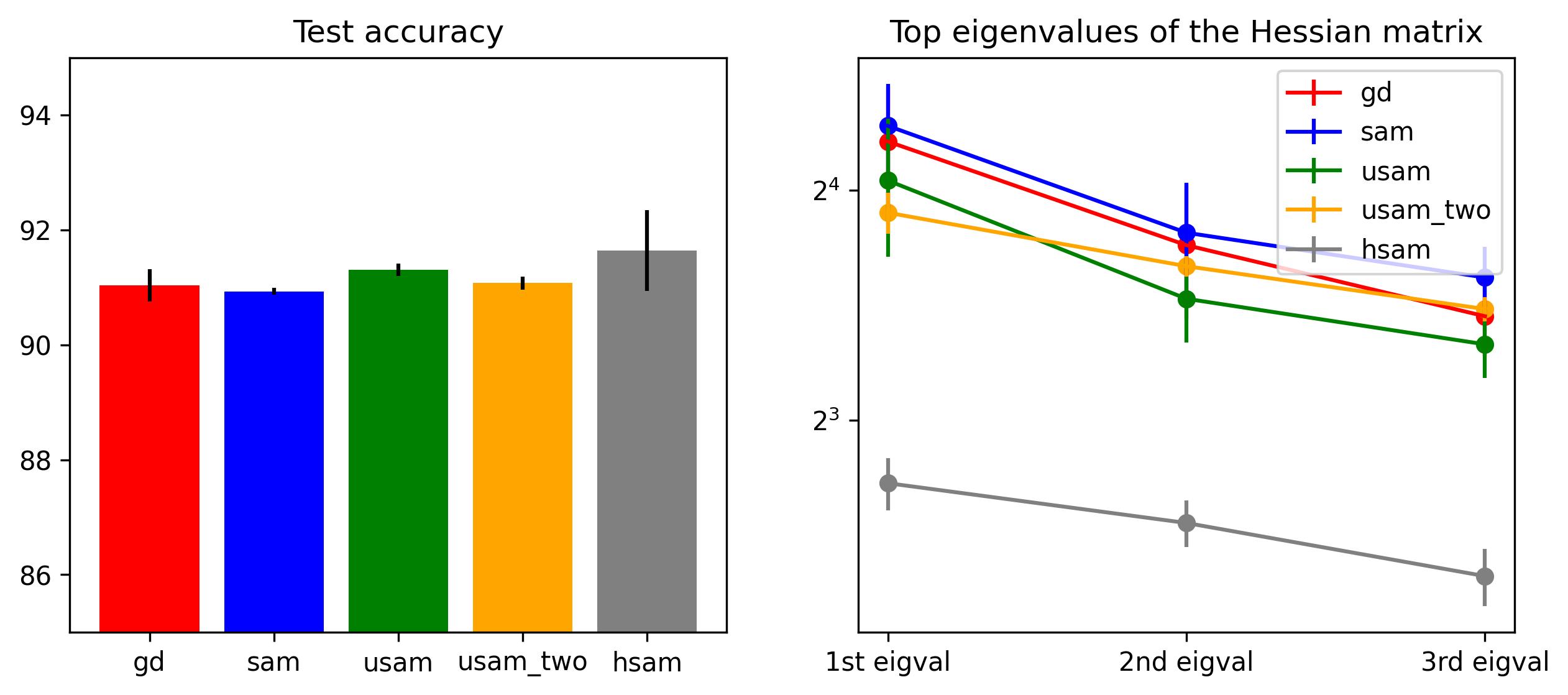}
    \caption{(\textbf{Experiment 3}) - Models are trained with \emph{stochastic gradient descent} and its corresponding to SAM, USAM, USAM2 and Hessian USAM versions. The values of $\rho$ of all SAM-like algorithms are set at $\rho = 0.001$.}
    \label{fig:wideresnet-cifar10}
\end{figure}

The MLP experiments illustrate the fact that our theoretical findings transfer well to the experimental setting: in general, methods such as USAM, USAM2 and Hessian USAM consistently find flatter (or low-curvatured) minimizers in comparison to their vanilla versions. The experiment is far from the idealized assumptions in \Cref{theorem:learning-rate-gradient-stability}, with non-smoothness (since we use ReLU neural networks) and stochasticity (in the optimization algorithms), but the theory definitely aligns with empirical results: USAM2 and Hessian USAM filter Hessian eigenvalues more efficiently, and find therefore solutions with wider basins.

As for the WideResNet the situation is not as clear in \Cref{fig:wideresnet-cifar10}, the differences are less pronounced, notably between USAM and USAM2. This architecture is much more difficult to train than the MLP architecture considered above. Another outcome of the experiment, which aligns well with our stability analysis is as follows. For a fixed $\alpha$, USAM2 and Hessian USAM require much smaller values of $\rho$ than USAM to avoid training failure. This is consistent with the fact that both USAM2 and Hessian USAM enforce more restrictions on the limiting curvature in comparison to USAM. For this reason we needed to significantly decrease the value to $\rho = 0.001$ in order to obtain successful training. In this regime, the reduction of asymptotic curvature is limited.

Note that SAM also consistently finds flat minimizers, this fact is experimentally confirmed by previous works \cite{foret2021sharpnessaware,tan2024stabilizing} in several deep learning settings. Our empirical result resonates with these observations. Nevertheless, our theoretical results do not provide any explanation for this behavior and we leave this extension to future work.

Our last remark is that with architectures using batch normalization, SAM (and also USAM, USAM2 and Hessian SAM) does not empirically converge to flatter minimizers (see \cite[Section 4.2]{foret2021sharpnessaware}). This does not contradict our theoretical result because the presence of batch normalization changes the dynamics of the algorithm. Another future direction is to extend \Cref{theorem:learning-rate-gradient-stability} to also cover batch normalization operations.

\section{Conclusion}
We provide a simple, general, and versatile theoretical result on eigenvalues filtering (cf. \Cref{theorem:learning-rate-gradient-stability}) in the context of convergence of optimization algorithms. This takes the form of a variation on the Hadamard--Perron stable manifold theorem, which simplifies and generalizes existing results of this type in the machine learning literature. The proposed result aligns with recent empirical and theoretical advances in sharpness-aware minimization, large step size, generalization, and edge-of-stability phenomena.

We introduced two new algorithms, Two-step USAM and Hessian SAM. These algorithms are given to illustrate our theoretical findings on algorithmic stability (\Cref{theorem:learning-rate-gradient-stability}) in a deep network training scenario. We emphasize that the computational cost of a single iteration for each algorithm is higher than that of USAM. For this reason, we do not have empirical evidence that Two-step USAM or Hessian SAM provides a substantially practical advantage compared to SAM or USAM. They nonetheless illustrate the generality of the proposed theoretical analysis, and we leave extensive benchmarking of their empirical performance to future work.

\bibliography{references}
\bibliographystyle{iclr2026_conference}

\newpage
\appendix

\section{Proof of \Cref{theorem:learning-rate-gradient-stability}}
\label{sec:semi-algebraic}
We first provide preliminary lemmas and then proceed to the proof of \Cref{lemma:gradient-update-good} and \Cref{theorem:learning-rate-gradient-stability}.
\begin{lemma}
    \label{lemma:definability-preserves-zero-sets}
    Let $F: \RR^m \to \RR^m$ be a semi-algebraic function and $S \subset \RR^m$ be a $C^1$ embedded submanifold of dimension at most $m-1$. Then, $F(S)$ is contained in a countable union of $C^1$ embedded submanifolds of dimension at most $m-1$.
\end{lemma}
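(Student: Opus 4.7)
The plan is to apply a semi-algebraic $C^1$ cell decomposition in order to reduce the claim to a local diffeomorphism argument. By the cell decomposition theorem for semi-algebraic functions (cf.\ \cite{coste2000introductionSA}), there is a finite partition $\RR^m = C_1 \sqcup \cdots \sqcup C_N$ into semi-algebraic $C^1$ submanifolds such that $F|_{C_i}$ is $C^1$ for every $i$. Because the loci where $dF$ has rank exactly $r$ are semi-algebraic (cut out by polynomial conditions on minors), a further refinement lets me also assume that $dF$ has constant rank $r_i$ on each $C_i$. Since $F(S) = \bigcup_i F(S \cap C_i)$ is a finite union, it suffices to bound each piece separately.

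When $\dim C_i < m$ or $r_i < m$, I would argue by a pure dimension count: $F(C_i)$ is semi-algebraic (image of a semi-algebraic set under a semi-algebraic map, by Tarski--Seidenberg) of dimension at most $\min(\dim C_i, r_i) < m$. By the standard stratification of semi-algebraic sets into $C^1$ embedded manifolds, $F(C_i) \supset F(S \cap C_i)$ lies in a finite union of $C^1$ embedded submanifolds of $\RR^m$ of dimension $\le m-1$. Note that this part does not even use the hypothesis on $S$.

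The remaining case, $\dim C_i = m$ with $r_i = m$, is where the assumption $\dim S \le m-1$ is actually needed. Here $C_i$ is open in $\RR^m$ and $F|_{C_i}$ is a local $C^1$ diffeomorphism by the inverse function theorem. I would cover $C_i$ by countably many open sets $U_j$ on each of which $F$ restricts to a genuine $C^1$ diffeomorphism onto an open set $F(U_j) \subset \RR^m$; second countability of $\RR^m$ makes a countable cover suffice. Since $U_j$ is open in $\RR^m$, $S \cap U_j$ is a $C^1$ embedded submanifold of $U_j$ of dimension $\le m-1$, so its image $F(S \cap U_j)$ under a diffeomorphism is a $C^1$ embedded submanifold of the open set $F(U_j) \subset \RR^m$ of the same dimension. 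Summing over $j$, $F(S \cap C_i) = \bigcup_j F(S \cap U_j)$ is a countable union of $C^1$ embedded submanifolds of dimension $\le m-1$.

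The main obstacle is arranging a single semi-algebraic stratification that simultaneously makes $F$ of class $C^1$ \emph{and} of locally constant rank on every piece; this is exactly what the $C^k$-cell decomposition combined with the rank stratification provides. Once that is in hand, the rest is routine bookkeeping: Tarski--Seidenberg handles the low-dimensional or rank-deficient cells, and the inverse function theorem handles the full-rank, full-dimensional ones, where it is crucial that the image submanifolds are genuinely \emph{embedded} (not just immersed), which holds because $F|_{U_j}$ is a diffeomorphism onto an open subset of $\RR^m$.
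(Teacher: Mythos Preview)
Your proof is correct and follows essentially the same strategy as the paper's: stratify $\RR^m$ into semi-algebraic pieces on which $F$ is $C^1$ of constant rank, dispose of the low-dimensional and rank-deficient pieces by a dimension bound on the semi-algebraic image, and handle the full-rank open pieces via the inverse function theorem and a countable cover. The only cosmetic difference is that the paper invokes Sard's theorem (together with the fact that a zero-measure semi-algebraic set has dimension $<m$) for the rank-deficient cells, whereas you use a direct Tarski--Seidenberg dimension count; the two routes are interchangeable.
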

\begin{proof}
	Invoking \cite[Lemma C.2]{o-minimal-structures} there exists $P_1, \ldots, P_N$, disjoint, semi-algebraic and open, whose union is dense in $\RR^m$, such that for each $k = 1,\ldots, N$, the restriction $F_k = F \mid_{P_k}: P_k \to \RR^m$ is $C^1$ and $J_{k}:= \jac {F_k}$ has constant rank. Set $P_0 = \cap_{k=1}^N P_k^c$, $P_0$ is semi-algebraic of dimension at most $m-1$. Partition the set $S$ into:
    \begin{equation*}
        S_k = P_k \cap S, \qquad \forall k = 0, 1, \ldots, N.
    \end{equation*}
	We have that $\cup_{k=0}^N P_k = \RR^m$ so that $F(S) = \cup_{k=0}^m F_k(S_k)$. Consider three cases:
    \begin{enumerate}[leftmargin=*]
        \item $k = 0$: $F(S_0) \subset F(P_0)$ which is of dimension at most $m-1$ by \cite[4.7]{o-minimal-structures} so that $F(P_0)$ is contained in a finite union of $C^1$ embedded submanifolds of dimension at most $m-1$ \cite[4.8]{o-minimal-structures}.
        \item $k > 0$ and $\rank(J_k) < m$: Then $S_k \subseteq P_k$ is a subset of the critical points of the mapping $F$. Since $F_k: P_k \to \RR^m$ is $C^1$, we can apply the Sard theorem to conclude that $\mu(F(P_k)) = 0$ ($\mu(\cdot)$ is the Lebesgue measure). Since $F(P_k)$ is also semi-algebraic (because $F$ and $P_k$ are semi-algebraic), its zero Lebesgue measure implies that $F(S_k) \subseteq F(P_k)$ is contained in a finite union of $C^1$ embedded submanifolds. 
        \item If $k > 0$ and $\rank(J_k) = m$: then $F_k$ is a local diffeomorphism. It implies that for any $x \in S$, there exists an open neighborhood $V_x \subseteq P_k$ such that $F(S \cap V_x)$ is an embedded submanifold of dimension at most $m-1$. By taking a countable open covering $\{V_i, i \in \NN\}$ of $S_k$, we have:
        \begin{equation*}
            F_k(S_k) \subseteq \bigcup_{i \in \NN} F(S \cap V_i),
        \end{equation*}
    \end{enumerate}
    and hence, the result.
\end{proof}

\begin{lemma}
    \label{lemma:definable-selection}
	Consider a semi-algebraic set $S \subset \RR^n \times \RR^m$. If for all $x \in \RR^n$, the fiber $S_x = S \cap \{x\} \times \RR^m$ only contains isolated points. Then there exists an integer $N$ and $N$ semi-algebraic functions $F_1, \ldots, F_N \colon \proj_{\RR^n} S \to \RR^m$, $k = 1, \ldots, N$ such that:
    \begin{equation}
        \label{eq:partition-definable-set-to-functions}
        S = \bigcup_{k = 1}^N \graph F_k. 
    \end{equation}
\end{lemma}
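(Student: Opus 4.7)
The plan is to leverage the cell decomposition theorem for semi-algebraic sets (see e.g.\ \cite{coste2000introductionSA}) applied compatibly with the canonical projection $\pi := \proj_{\RR^n}\colon \RR^n \times \RR^m \to \RR^n$. This yields a finite partition $S = \bigsqcup_{i=1}^M C_i$ into semi-algebraic cells, where each $C_i$ is either (i) the graph of a continuous semi-algebraic function $\phi_i\colon \pi(C_i) \to \RR^m$, or (ii) a ``band'' cell whose fibers over points of $\pi(C_i)$ contain non-degenerate intervals in some coordinate direction.

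Using the hypothesis that every $S_x$ consists of isolated points, I would first rule out all cells of type (ii): any band cell would contribute a non-trivial arc to some fiber $S_x$, contradicting discreteness. Hence every $C_i$ is the graph of a semi-algebraic function $\phi_i\colon \pi(C_i) \to \RR^m$, and in particular $|S_x| \leq M$ for every $x \in \pi(S)$.

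Next I would produce $N := M$ functions defined globally on $\pi(S)$ via lexicographic selection. For each $x \in \pi(S)$, order the finitely many points of $S_x$ as $y^1(x) \prec_{\mathrm{lex}} \cdots \prec_{\mathrm{lex}} y^{|S_x|}(x)$, and set
\[
F_k(x) = \begin{cases} y^k(x) & \text{if } |S_x| \geq k, \\ y^1(x) & \text{otherwise,} \end{cases}
\]
for $k = 1,\ldots, N$. Each $F_k$ is semi-algebraic because lexicographic comparison, cardinality thresholding, and ``$k$-th smallest element'' are all expressible by first-order formulas over an ordered field, hence preserved under Tarski--Seidenberg. By construction $\graph F_k \subseteq S$ for every $k$, and every $(x,y)\in S$ equals $(x, F_k(x))$ for some $k \leq |S_x|$, which gives $S = \bigcup_{k=1}^N \graph F_k$.

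The main conceptual obstacle is invoking the appropriate form of cell decomposition and checking that $0$-dimensional (equivalently, isolated and hence finite) semi-algebraic fibers preclude band-type cells: this is classical in o-minimal geometry but should be cited carefully rather than reproved from scratch. Everything else reduces to routine bookkeeping about first-order definability of the lexicographic selection.
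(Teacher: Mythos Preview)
Your argument is correct, but it follows a different path from the paper's proof. The paper avoids cell decomposition entirely: it first invokes the uniform bound on the number of connected components of semi-algebraic fibers \cite[4.4]{o-minimal-structures} to get $|S_x|\le N$, and then builds the $F_k$ \emph{recursively} by repeatedly applying the semi-algebraic selection principle \cite[4.5]{o-minimal-structures}, peeling off one graph at a time via $S_{k+1}=S_k\setminus\graph F_k$ and finally extending each $F_k$ to $\proj_{\RR^n}S$ by the value of $F_1$. Your route instead obtains the uniform bound as a by-product of cell decomposition and then performs a single, explicit lexicographic selection of the $k$-th fiber point. Both arguments are standard in o-minimal geometry; yours has the virtue of producing a canonical, globally defined choice of $F_k$ in one stroke, while the paper's recursive construction is lighter in that it only needs the two cited properties rather than the full cell decomposition theorem. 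Your remark that the ``band-cell'' exclusion deserves a careful citation is apt: the precise statement you need is that in a cylindrical decomposition adapted to the projection $\RR^n\times\RR^m\to\RR^n$, each cell has fibers of constant dimension, so a zero-dimensional fiber forces the cell to be a graph.
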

\begin{proof}
	By \cite[Properties 4.4]{o-minimal-structures}, the number of connected components of $S_x$ is uniformly bounded by a number $N \in \NN$. Moreover, the connected components are singletons because they only consist of isolated points. Therefore, there exists a positive integer $N$ such that $|S_x| < N$, for all $x \in \RR^n$.

	We construct the semi-algebraic functions $F_1, \ldots, F_N$ recursively as follows. Set $S_1 = S$ by \cite[Property 4.5]{o-minimal-structures}, there is a semi-algebraic function $F_1 \colon \proj_{\RR^n} S_1 \to \RR^m$ such that $\graph F_1 \subset S_1 = S$. We set $\cD = \proj_{\RR^n} S$, the domain of $F_1$. Recursively, we set for $k \geq 2$, $S_k = S_{k-1} \setminus \graph F_{k-1}$ and define similarly $F_k \colon \proj_{\RR^n} S_k \to \RR^m$ such that $\graph F_k \subset S_k \subset S$. At each iterations the cardinality of the fibers of $S_k$ is reduced by at least $1$ compared to those of $S_{k-1}$. After $N$ iterations, we have $S_N \setminus \graph F_N = \emptyset$ and $S = \bigcup_{k = 1}^N \graph F_k$. Each function can be extended to the whole set $\cD$ by choosing the value $F_k(x) = F_1(x)$ outside of the domain of definition of $F_k$. This preserves semi-algebraicity as well as the equality; this concludes the proof.
\end{proof}

\begin{proof}[Proof of \Cref{lemma:gradient-update-good}]
    It is sufficient to prove the result for $D = I$. Indeed, we can rewrite $G_\alpha = D (x - \alpha D^{-1}g(x))$. If we find $\Lambda$ satisfy \Cref{lemma:gradient-update-good} for $\bar{G}_\alpha := x - \alpha D^{-1}g(x)$, then the same $\Lambda$ also works for $G_\alpha$ since $G_\alpha$ is a composition of a global diffeomorphism $x \mapsto D x$ and $\bar{G}_\alpha$. Therefore, in the following, we can assume that $D = I$, i.e. $G_\alpha(x) = x - \alpha g(x)$.

	Consider $\lambda^\cR(\firstvar): \RR^m \to \RR^m: (\lambda^\cR_i)_{i = 1}^m$, the real parts of eigenvalues of the Jacobian matrix $\jac g(\firstvar)$, counted with their multiplicity. Since $\lambda^\cR_i, i = 1, \ldots, m$ are semi-algebraic, we choose a semi-algebraic, open, and dense subset $I \subset \RR^m$ so that $\lambda^\cR_i, i = 1, \ldots, M$ are all differentiable on $I$.

	We define $\Lambda := \{\alpha > 0 \mid \alpha^{-1} \notin \cup_{i = 1}^m \lambda^\cR_i(\crit \lambda^\cR_i)\}$. We prove that $\Lambda$ satisfies the conditions of \Cref{lemma:gradient-update-good}. Due to the semi-algebraic Sard's theorem \cite{kurdyka2000semialgebraic}, the set of critical values of $\lambda^\cR_{i}$ is of zero Lebesgue measure. Moreover, it is also semi-algebraic. Hence, $\cup_{i = 1}^m \lambda^\cR_i(\crit \lambda^\cR_i)$ is finite. Thus, the complement $\RR_{>0} \setminus \Lambda$ is finite. Fix $\alpha \not \in \Lambda$, we are going to verify the required condition.
	
	Set $K_\alpha = \{\firstvar \in \RR^m \mid \det(I - \alpha \jac g(\firstvar)) = 0\}$, which is semi-algebraic. We are going to show that $\dim(K_\alpha) < m$.
	Partition $K_\alpha$ into two sets:
    \begin{equation*}
        K_1 = K_\alpha \cap I \qquad \text{ and } \qquad K_2 = K_\alpha \cap I^c,
    \end{equation*}
	where $I$ is a semi-algebraic dense open set in which all functions $\lambda_i^{\cR}$, $i=1,\ldots,m$ are differentiable. Since $I$ is open, dense and semi-algebraic, $\dim(I^c) < m$ and thus, $\dim(K_2) < m$. To prove that $\dim(K_1) < m$, we notice that:
    \begin{equation*}
        K_\alpha \subseteq \cup_{i = 1}^m K_{\alpha, i} \quad \text{ where } \quad K_{\alpha, i} := \{x \in I \mid \alpha\lambda_i^{\cR}(x) = 1\}.
    \end{equation*}
    The sets $K_{\alpha,i}, i = 1, \ldots, m$ are semi-algebraic themselves. Their dimension has to be strictly smaller than $m$. Indeed, by contradiction, if there exists $i$ such that $K_{\alpha,i}$ has dimension $m$, it must contain an open set $O$. For all $x \in O$, $\lambda_i^{\cR}(x) = 1/\alpha$ is a constant. Therefore, $1 / \alpha$ is a critical value of $\lambda_i^{\cR}$, which contradicts our choice of $\alpha$. Thus, $K_{\alpha,i}$ and $K_\alpha$ have dimension strictly smaller $m$.
    
	Given an embedded $C^1$ submanifold $S$ of dimension $p<m$, we partition its pre-image by $G_\alpha$, $\alpha \in \Lambda$ as:
    \begin{equation*}
        S_1 = G_\alpha^{-1}(S) \cap K_\alpha \qquad \text{ and } \qquad S_2 = G_\alpha^{-1}(S) \cap K_\alpha^c.
    \end{equation*}
    Since $K_\alpha$ is semi-algebraic of dimension at most $m-1$, it is contained in a finite union of submanifolds of dimension at most $m-1$. It is sufficient to show that the same property holds for $S_2$ as well. Consider the following set:
    \begin{equation*}
        T = \{(y,x) \mid y = x - \alpha g(x) \text{ and } x \in K_\alpha^c\} \subseteq \RR^{2m}.
    \end{equation*}
		$S_2$ is the projection of $T \cap S \times \RR^m$ onto the last $m$ coordinates. The set $T$ is semi-algebraic (although $S_2$ is generally not since we do not assume that $S$ is semi-algebraic). In particular, for any $(y,x) \in T$, the following equation is satisfied $f(x, y) = y  - x + \alpha g(x) = 0$. In addition, $x \in K_\alpha^c$, so that $\frac{\partial}{\partial x} f(x,y) = - I + \alpha \jac g (x)$ which is invertible since $x \in K_\alpha^c$. By the implicit function theorem, we can conclude that the fiber $T_y := \{x \mid (y,x) \in T\}$ contains isolated points. And since $T_y$ is semi-algebraic, $T_y$ is finite. We are, thus, in the position to apply \Cref{lemma:definable-selection}: There exists $N$ semi-algebraic $F_1, \ldots, F_N, N > 0$ such that:
    \begin{equation*}
        T \subseteq \bigcup_{k = 1}^N \graph F_k. 
    \end{equation*}
    In particular, this relation implies:
    \begin{equation*}
        S_2 \subseteq \bigcup_{k = 1}^N F_k(S).
    \end{equation*}
    The result follows by \Cref{lemma:definability-preserves-zero-sets}.
\end{proof}

We conclude this section with the proof of the main theorem, following standard arguments.
\begin{proof}[Proof of \Cref{theorem:learning-rate-gradient-stability}]
    Take $\Lambda$ as in \Cref{lemma:gradient-update-good}, whose complement is finite. We prove that this $\Lambda$ is our desired set described as in \Cref{theorem:learning-rate-gradient-stability}. It is sufficient to prove the second condition.

	Fix $\alpha \in \Lambda$ and set $ \cC_\alpha = \{x \in \RR^n \mid G_\alpha(x) = x,\, \rho(\jac G_\alpha(x)) > 1\}$.
    For each $x \in \cC_\alpha$, let $B_x$ be the balls corresponding to $x$ given by \Cref{theorem:refined-center-stable-manifold}. In particular, we have:
    \begin{equation*}
        \cC_\alpha \subseteq \bigcup_{x \in \cC_\alpha} B_x. 
    \end{equation*}
	By Lindelof’s lemma, there exists a sequence $(z_\ell)_{\ell \in \NN}$ in $\cC_\alpha$, such that $\cC_\alpha \subseteq \cup_{\ell \in \NN} B_{z_\ell}$.
    
	Assume that the update \Cref{eq:iterative-methods} initialized at a point $x_0$ and converges to a point $x \in \cC_\alpha$. Thus, there exists natural numbers $\ell, k_0 \in \NN$ such that for $k \geq k_0$, $G_\alpha^k(x_0) \in B_{z_\ell}$. In particular, in the light of \Cref{theorem:refined-center-stable-manifold}, $G_\alpha^{k_0}(x_0) \in W^{\text{cs}}_{z_\ell}$, the local stable center manifold of $z_\ell$, given by \Cref{theorem:refined-center-stable-manifold}. Since $x \in \cC_\alpha$ was arbitrary, this shows that the set $W_\alpha$ in the statement of the theorem has the following properties
    \begin{equation*}
        W_\alpha \subseteq \bigcup_{\ell \in \NN} \bigcup_{k \in \NN} G_\alpha^{-k}(W^{\text{cs}}_{z_\ell}).
    \end{equation*}
	For each $\ell \in \NN$, $W^{\text{cs}}_{z_\ell}$ is a $C^1$ embedded submanifold. Using \Cref{lemma:definability-preserves-zero-sets}, we see that $W_\alpha$ is contained in a countable union of $C^1$ embedded submanifolds as announced.
\end{proof}

\section{Proof of \Cref{theorem:refined-center-stable-manifold}}
\label{sec:center-stable-manifold}
The main technical tool to obtain this result is a stability theorem related to pseudo hyperbolicity \cite{hirsch1977invariant}, which we report for Euclidean spaces. We start with the definition of pseudo-hyperbolicity.

\begin{definition}[$\rho$-pseudo hyperbolicity]
    A linear map $T: \RR^n \to \RR^n$ is $\rho$-pseudo hyperbolic if all eigenvalues of $T$ have absolute values different from $\rho > 0$.
    Suppose $T$ is $\rho$-pseudo hyperbolic. In that case, we define $\RR^n = E_{sc} \oplus E_u$ the canonical splitting of $T$ where $E_{sc}$ (resp. $E_u$) is the linear subspace induced by the eigenvectors corresponding to the eigenvalues with absolute values smaller (resp. bigger) than $\rho$.
\end{definition}

\begin{theorem}[Stable manifold for pseudo hyperbolic maps {\cite[Theorem 5.1]{hirsch1977invariant}}]
    \label{lemma:Lipschitz-pertubation-manifold}
    Consider $T: \RR^n \to \RR^n$ a $\rho$-pseudo hyperbolic linear map and its canonical splitting $\RR^n = E_{sc} \oplus E_u$. If $F: \RR^n \to \RR^n$ is a $C^1$ map, $F(0) = 0$ and the function $F - T$ has a sufficiently small Lipschitz constant $\epsilon$, then the set:
    \begin{equation*}
        W = \bigcap_{k \geq 0} F^{-k} S, \qquad S = \{(x,y) \in E_{u} \times E_{sc}: \|y\| \geq \|x\|\}
    \end{equation*}
    is the graph of a $C_1$ function $g: E_{sc} \to E_u $. It is characterized by: $z \in W$ if and only if: 
    $$\lim_{k \to \infty} \|F^k(z)\| / \rho^k = 0.$$
\end{theorem}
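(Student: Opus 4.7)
The plan is to follow a Perron-style fixed-point argument on a weighted sequence space. I would decompose $\RR^n = E_u \oplus E_{sc}$ and write points as $(x,y)$ with $x \in E_u$, $y \in E_{sc}$. Using the spectral gap built into $\rho$-pseudo-hyperbolicity, I can choose adapted norms on $E_u, E_{sc}$ combined into a box-norm $\|(x,y)\| = \max(\|x\|,\|y\|)$, with constants satisfying $\|T_{sc}\| \le \mu < \rho < \nu \le \|T_u^{-1}\|^{-1}$, where $T_u, T_{sc}$ are the restrictions of $T$ to the invariant subspaces. Writing $F = T + \phi$ with $\phi(0)=0$ and $\operatorname{Lip}(\phi)\le\epsilon$ sufficiently small, the dynamics splits as
\begin{equation*}
x_{k+1} = T_u x_k + \phi_u(x_k,y_k), \qquad y_{k+1} = T_{sc} y_k + \phi_{sc}(x_k,y_k).
\end{equation*}

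Next, for a fixed $y_0 \in E_{sc}$, I would look for orbits in the Banach space $\mathcal{B} = \{(z_k)_{k\ge 0} : \sup_k \rho^{-k}\|z_k\| < \infty\}$ equipped with the weighted sup-norm. The expanding component is handled backwards in time: iterating $x_k = T_u^{-1} x_{k+1} - T_u^{-1}\phi_u(z_k)$ and noting $\|T_u^{-N}x_{k+N}\| \le \nu^{-N}\rho^{k+N}\|(z_j)\|_{\mathcal{B}} \to 0$ yields the Duhamel-type formula $x_k = -\sum_{j\ge 0} T_u^{-(j+1)}\phi_u(z_{k+j})$, while the contracting component is solved forward, $y_k = T_{sc}^k y_0 + \sum_{j=0}^{k-1} T_{sc}^{k-1-j}\phi_{sc}(z_j)$. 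These formulas define an operator $\Psi_{y_0}: \mathcal{B} \to \mathcal{B}$ whose fixed points are exactly the forward $F$-orbits with prescribed $E_{sc}$-coordinate $y_0$ at time zero that remain of order $\rho^k$. For $\epsilon$ small, $\Psi_{y_0}$ is a strict contraction uniformly in $y_0$; Banach's principle yields a unique fixed point, and setting $g(y_0) := x_0$ defines the candidate map $g: E_{sc} \to E_u$. Lipschitz dependence of fixed points on the parameter $y_0$ then shows $g$ is Lipschitz with constant $\le 1$ (shrinking $\epsilon$ further if needed).

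To match this construction with the cone description of $W$, I would argue both inclusions. If $z_0 \in \operatorname{graph}(g)$, invariance of the graph (obtained from the fixed-point equation and uniqueness) gives $F^k(z_0) = (g(y_k), y_k)$ with $\|g(y_k)\| \le \|y_k\|$, so $z_0 \in W$; feeding the weighted-norm bound back into the recursions actually improves it to $\|F^k(z_0)\|/\rho^k \to 0$. Conversely, if $z_0 \in W$ then $\|x_k\|\le\|y_k\|$ for all $k$, so $\|z_k\| = \|y_k\|$ in the box-norm; propagating $\|y_{k+1}\| \le \mu\|y_k\| + \epsilon\|z_k\|$ with the cone bound gives $\|z_k\| \le (\mu+\epsilon)^k\|y_0\|$, and $\mu+\epsilon<\rho$ places $(z_k)$ in $\mathcal{B}$. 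Uniqueness of the $\Psi_{y_0}$-fixed point then forces $z_0 = (g(y_0), y_0)$, yielding both the graph identification of $W$ and the decay characterization.

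The main obstacle is upgrading $g$ from Lipschitz to $C^1$, and this is where the classical proofs spend real effort. My strategy is the fiber contraction theorem of Hirsch--Pugh--Shub: formally differentiating the fixed-point equation in $y_0$ gives a candidate $Dg(y_0)$ satisfying its own affine fixed-point equation, whose operator depends continuously on the base orbit. I would lift $\Psi$ to a bundle whose fibers are bounded linear maps $E_{sc} \to E_u$; the lifted map is a strict contraction on each fiber and projects to the already-contracting base $\Psi_{y_0}$. The fiber contraction principle then produces a continuous section, which a Taylor-remainder estimate (integrated against difference quotients in $y_0$) identifies with the true derivative, giving $g \in C^1$. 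With this in hand, the invariance of $\operatorname{graph}(g)$ under $F$ and the two characterizations of $W$ follow from the uniform estimates already established.
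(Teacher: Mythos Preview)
The paper does not prove this theorem: it is quoted verbatim as \cite[Theorem 5.1]{hirsch1977invariant} and used as a black box in the proof of \Cref{theorem:refined-center-stable-manifold}. There is therefore no ``paper's own proof'' to compare against.

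That said, your sketch is a faithful rendition of the Perron method as it appears in Hirsch--Pugh--Shub (and in Shub's book). The ingredients you list --- adapted box norm exploiting the spectral gap $\mu < \rho < \nu$, the weighted sequence space $\mathcal{B}$ with norm $\sup_k \rho^{-k}\|z_k\|$, the Duhamel splitting that solves the unstable component backward and the center-stable component forward, the uniform contraction $\Psi_{y_0}$, and the fiber contraction theorem for the $C^1$ upgrade --- are exactly the machinery of the cited reference. The two-way inclusion between $\operatorname{graph}(g)$ and the cone set $W$, and the bootstrap from ``orbit stays in the cone'' to ``orbit decays like $(\mu+\epsilon)^k$'', are also the standard moves. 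If anything, your outline is slightly more explicit than the textbook presentation about why the cone description and the $\rho^{-k}$-decay description coincide, which is precisely the form the paper needs downstream.
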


Given \Cref{lemma:Lipschitz-pertubation-manifold}, one can adapt localization arguments, such as \cite[Chapter 5, Theorem III.7]{shub1987global}, to prove \Cref{theorem:refined-center-stable-manifold}.
\begin{proof}[Proof of \Cref{theorem:refined-center-stable-manifold}]
    We may assume that $p = 0$ by studying $x \mapsto F(x + p) - p$ instead of $F$. Consider a $C^\infty$ function $\varphi(x): \RR^n \to \RR$ satisfying:
    \begin{equation*}
        \begin{cases}
            \varphi(x) = 1 & \text{if } \|x\| \leq 1\\
            \varphi(x) = 0 & \text{if } \|x\| \geq 2\\
        \end{cases}.
    \end{equation*}
    Such a function exists and it is known as a bump function. In particular, if one defines: $\varphi_s(\cdot) = \varphi(\cdot / s), s > 0$, then the function $\varphi_s$ is smooth and satisfies:
    \begin{equation*}
        \begin{cases}
            \varphi_s(x) = 1 & \text{if } \|x\| \leq s\\
            \varphi_s(x) = 0 & \text{if } \|x\| \geq 2s\\
        \end{cases}.
    \end{equation*}

	Let $T$ denote the linear mapping $x \mapsto \jac F(0)x$,
	define $h = F - T$,  $h_s  = \varphi_s \times h$ and $F_s = T+h_s$. We have for any $s>0$, $h_s = h$ and $F_s = F$ in the ball of radius $s$. One may choose $s>0$ such that the Lipschitz constant of $h_s = F_s - T$ is arbitrarily small. Indeed, we have for all $x$:
    \begin{equation*}
		\jac h_s(x) = h(x) \nabla \varphi_s(x)^T  + \varphi_s(x) \jac h(x) = \frac{h(x)}{s}\nabla \varphi\left(\frac{x}{s}\right)^T  + \varphi_s(x) \jac h(x).
    \end{equation*}
	We remark that for any $s>0$, $\jac h_s(x) = 0$ for any $x$ such that $\|x\|_2 \geq 2s$, we will obtain a bound on a ball of radius $2s$ for $s$ small. Since $h$ is $C^1$ and $\jac h(0) = 0$, we have:
    \begin{align*}
		\lim_{s \to 0} \sup_{\|x\| \leq 2s} \frac{\|h(x)\|}{s} &= 0,\\
		\lim_{s \to 0} \sup_{\|x\| \leq 2s} \|\jac h(x)\| &= 0.
    \end{align*}
    Since $\varphi$ is smooth and constant outside a ball, both $\varphi_s$ and $\nabla \varphi$ are globally bounded. Therefore, we conclude that: for any $\epsilon > 0$, there exists $s > 0$ such that:
    \begin{equation*}
        \|\jac h_s(x)\| \leq \epsilon, \forall x \in \RR^n,
    \end{equation*}
	which implies that $F_s - T$ is $\epsilon$ Lipschitz. We fix $\rho > 1$ such that the absolute values of all eigenvalues of $\jac F(0)$ are different from $\rho$ and apply \Cref{lemma:Lipschitz-pertubation-manifold} to conclude that there exists a $C^1$ function $g:  E_{sc} \to E_u$ such that: $x \in \graph g$ if and only if:
    \begin{equation}
        \label{eq:invariant-characterization}
        \lim_{k \to \infty} \frac{\|F_s^k(x)\|}{\rho^k} = 0,
    \end{equation}
	We define $B := B(0, s)$ and $W^{\text{sc}}_{\text{loc}} = \graph g \cap B$, which satisfies the requirements of \Cref{theorem:refined-center-stable-manifold}. First, $\graph F$ is $F_s$ invariant from the characterization \Cref{eq:invariant-characterization} and $F_s \mid_{B} = F \mid_B$ hence 
    $$F(\graph g \cap B) \cap B = F_s(\graph g \cap B) \cap B \subset \graph g \cap B,$$ 
    which proves the first property. Second, if $F^k(x) \in 
    B$ for all $k \in \NN$, 
	%if and only if $f_s^k(x) \in B$ for all $k \in \NN$, since $f_s = f$ on $B$, if and only if $x \in W^{\text{sc}}_{\text{loc}}$. Indeed, if $x \in W^{\text{sc}}_{\text{loc}}$, then $f_s^k(x) \in B$ for all $k \in \NN$, and conversely, if $f_s^k(x) \in B$ for all $k \in \NN$
	then
	\begin{equation*}
		{\lim }_{k\to \infty}\frac{\|F_s^{k}(x)\|}{\rho^{k}} = {\lim }_{k\to \infty}\frac{\|F^{k}(x)\|}{\rho^{k}} = 0,
    \end{equation*}
    since $\rho > 1$ and $F^k(x)$ is bounded, that is $x \in \graph g \cap B$. The proof is concluded. 
\end{proof}

\section{Details on eigenvalue computation}
\label{sec:algorithms}
\begin{proof}[Proof of \Cref{cor:application-gradient-descent}]
    We apply \Cref{theorem:learning-rate-gradient-stability} with $D = I$ and $g = \nabla f$. Let $\lambda_1, \ldots, \lambda_m$ be the eigenvalues of $\nabla^2 f(\bar{x})$, then the eigenvalues of $D - \alpha \jac g(\bar{x})$ is given by:
    \begin{equation*}
        \{1 - \alpha \lambda_i \mid i = 1, \ldots, m\}
    \end{equation*}
    Therefore, $\rho(D - \alpha \jac g(\bar{x})) \leq 1$ is equivalent to:
    \begin{equation*}
        -1 \leq 1 - \alpha \lambda_i \leq 1, \forall i \qquad \Leftrightarrow \qquad 0 \leq \lambda_i \leq \frac{2}{\alpha}, \forall i. \qedhere
    \end{equation*}
\end{proof}
\begin{proof}[Proof of \Cref{cor:application-momentum-gradient}]
    The proof consists of calculating the eigenvalues of the following matrix:
    \begin{equation*}
        A := \begin{pmatrix}
            (1 + \beta)I - \alpha \nabla^2 f(x) & -\beta I \\
            I & {0}
        \end{pmatrix}
    \end{equation*}
    We reproduce the arguments in \cite[Chapter $3.2$]{polyak1987introduction}.
    
    It can be shown that for any eigenvalue $\lambda$ of $\nabla^2 f(x)$, there is a corresponding pair of eigenvalues of $A$, which are the zero of the following quadratic equation :
    \begin{equation*}
        \nu^2 - \nu (1 + \beta - \alpha\lambda) + \beta = 0.
    \end{equation*}
    To make sure that all eigenvalues of $A$ has norm smaller than one, it is necessary and sufficient that:
    \begin{equation*}
        0 \leq \alpha \lambda \leq 2(1 + \beta) 
    \end{equation*}
    for every eigenvalue $\lambda$ of $\nabla^2 f(x)$, which proves the result.
\end{proof}
\begin{proof}[Proof of \Cref{cor:application-nesterov-gradient}]
    The proof of \Cref{cor:application-nesterov-gradient} is similar to the proof of \Cref{cor:application-momentum-gradient}. However, since it seems to us that the calculation of eigenvalues for the Jacobian matrix of the iteration update \Cref{eq:nesterov-accelerated-gradient} is less known, we provide a detailed derivation.

    Note that $(\bar{x},\bar{y})$ is a fixed point of \Cref{eq:nesterov-accelerated-gradient} if and only if $\bar{x} = \bar{y}$ and $\nabla f(\bar{x}) = 0$. We derive the Jacobian matrix of \Cref{eq:nesterov-accelerated-gradient} as:
    \begin{equation*}
        \begin{pmatrix}
            (1 + \beta)(I - \alpha \nabla^2 f(\bar{x})) & -\beta (I - \alpha \nabla^2 f(\bar{x}))\\
            I & 0
        \end{pmatrix},
    \end{equation*}
    hence, given an eigenvalue of $\lambda$ of $\nabla^2 f(\bar{x})$, the previous Jacobian matrix possesses two corresponding eigenvalues of the following $2 \times 2$ matrix:
    \begin{equation*}
        \begin{pmatrix}
            (1 + \beta)(1 - \alpha \lambda) & -\beta (1 - \alpha\lambda) \\
            1 & 0 
        \end{pmatrix}.
    \end{equation*}
    These eigenvalues are given by the roots of the following quadratic equations:
    \begin{equation*}
        \nu^2 - \nu \underbrace{(1 + \beta)(1 - \alpha\lambda)}_{:=b} + \underbrace{\beta (1 - \alpha\lambda)}_{:=c} = 0.
    \end{equation*}
    Consider two possibilities concerning $\Delta:   = b^2 - 4c = (1 + \beta)^2(1 - \alpha\lambda)^2 - 4\beta(1-\alpha\lambda)$.
    \begin{enumerate}[leftmargin=*]
        \item $\Delta < 0$: this condition implies that $\alpha \lambda \in [(\beta - 1)^2/(\beta + 1)^2, 1]$. When $\Delta < 0$, the quadratic equation admits two complex roots whose magnitude is given by:
        \begin{equation*}
            \frac{1}{4}(b^2 + 4c - b^2) = c.
        \end{equation*}
        Hence, the condition of \Cref{theorem:learning-rate-gradient-stability} reads as: 
        \begin{equation*}
            \beta (1 - \alpha \lambda) \leq 1 \implies \alpha\lambda \geq 1 - \frac{1}{\beta}
        \end{equation*}
        Note that when $0 < \beta < 1$, $1 - \frac{1}{\beta} < 0$. Therefore, the previous equation is satisfied automatically because $\alpha \lambda \geq (\beta - 1)^2/(\beta + 1)^2 \geq 0$.
        \item Whn $\Delta \geq 0$, the quadratic equation has two real roots, given by:
        \begin{equation*}
            \frac{b - \sqrt{\Delta}}{2} \qquad \qquad \text{and} \qquad \qquad \frac{b + \sqrt{\Delta}}{2}. 
        \end{equation*}
        Thus, the condition of \Cref{theorem:learning-rate-gradient-stability} reads as:
        \begin{equation*}
            \begin{aligned}
                \frac{b - \sqrt{\Delta}}{2} & \geq -1,\\
                \frac{b + \sqrt{\Delta}}{2} & \leq 1.\\
            \end{aligned}
        \end{equation*}
        Solving both inequalities:
        \begin{equation*}
            \begin{aligned}
                \frac{b - \sqrt{\Delta}}{2} \geq -1 &\implies b + 2 \geq \sqrt{\Delta}\\
                &\implies b^2 + 4b + 4 \geq \Delta = b^2 - 4c \\
                &\implies (1 + \beta)(1 -\alpha \lambda) + 1 \geq -\beta (1 -\alpha\lambda)\\
                &\implies (1 + 2\beta)(1 -\alpha\lambda) + 1 \geq 0\\
                &\implies \frac{2 + 2\beta}{1 + 2\beta} \geq \alpha\lambda.
            \end{aligned}
        \end{equation*}

        \begin{equation*}
            \begin{aligned}
                \frac{b + \sqrt{\Delta}}{2} \leq 1 &\implies 2 - b \geq \sqrt{\Delta}\\
                &\implies b^2 - 4b + 4 \geq \Delta = b^2 - 4c \\
                &\implies (1 + \beta)(1 -\alpha \lambda) - 1 \leq \beta (1 -\alpha\lambda)\\
                &\implies (1 -\alpha\lambda) - 1 \leq 0\\
                &\implies \alpha\lambda \geq 0.
            \end{aligned}
        \end{equation*}
    \end{enumerate}
    By combining all these observations, we conclude that $0 \leq \alpha \lambda \leq \frac{2 + 2\beta}{1 + 2\beta}$, which yields the result.
\end{proof}

\begin{proof}[Proof of \Cref{cor:application-usam-momentum}]
    We only consider the case $\beta = 0$ (cf. \Cref{eq:unnormalized-sam}). The computation similarly extends to general $\beta$ as in \Cref{cor:application-momentum-gradient}.
    
    We compute the Jacobian matrix the update rule \Cref{eq:unnormalized-sam}:
    \begin{equation*}
        I - \alpha\nabla^2 f(\bar{x} + \rho \underbrace{\nabla f(\bar{x})}_{= 0})(1 + \rho \nabla^2 f(\bar{x})) = I - \alpha\nabla^2 f(\bar{x})(1 + \rho \nabla^2 f(\bar{x})) 
    \end{equation*}
    and their eigenvalues, given by:
    \begin{equation*}
        1 - \alpha \lambda(1 + \rho \lambda),
    \end{equation*}
    where $\lambda$ is an eigenvalue of $\nabla^2 f(\bar{x})$.
    Applying \Cref{theorem:learning-rate-gradient-stability} simply yields the result. 
\end{proof}

\begin{proof}[Proof of \Cref{cor:application-usam-momentum-two}]
    Similar to the proof of \Cref{cor:application-usam-momentum}, we only consider the case $\beta = 0$ (cf. \Cref{eq:unnormalized-sam-sam}) and the computation for general $\beta$ (cf. \Cref{eq:momentum-usam}) can be done similarly as in \Cref{cor:application-momentum-gradient}. Consider $\bar{x}$ a critical point of $f$. Computing the Jacobian matrix of the iteration update of \Cref{eq:unnormalized-sam-sam} yields:
    \begin{equation*}
        H := I - \alpha \nabla^2 f(\bar{x}) (1 + \rho \nabla^2 f(\bar{x}))^2.
    \end{equation*}
    Note that for an eigenvalue of $\lambda$ of $\nabla^2 f(\bar{x})$, $H$ has an eigenvalue equal to $1 - \alpha \lambda(1 + \rho\lambda)^2$. Applying \Cref{cor:application-usam-momentum} yields the result immediately.
\end{proof}

\begin{proof}[Proof of \Cref{cor:application-hsam}]
    Similar to \Cref{cor:application-usam-momentum-two}.
\end{proof}

\section{Appearance of new fixed points}\label{appearance}

For $K>0$, set
$$
f(x)=
\begin{cases}
0 & x\le 1\\
-\dfrac{K}{3}\,(x-1)^3 & x>1,
\end{cases}
$$
so that \(\crit f = (-\infty,1]\) where all points are local minimizers except $x=1$.  One has $f'(x)=0$ for $x\le 1$, while $f'(x)=-K(x-1)^2$ for $x>1$.
To model USAM, set $T(x):=x+\rho f'(x)$, so that 
\[
T(x)=
\begin{cases}
x & x\le 1,\\[2pt]
x-\rho K (x-1)^2 & x>1.
\end{cases}
\]
Observe that $x\in T^{-1}(\crit f)\:\mbox{ iff }\:T(x)\le 1$. 

For $x>1$, write \(s:=x-1>0\); the condition becomes
\[
1+s-\rho K s^2 \le 1
\;\;\Longleftrightarrow\;\;
s(1-\rho K s)\le 0
\;\;\Longleftrightarrow\;\;
s \ge \frac{1}{\rho K}.
\]
Hence
$\displaystyle
T^{-1}(\crit f)\cap(1,+\infty)=\big[\,1+\tfrac{1}{\rho K},\,+\infty\big).$ This shows that USAM creates an infinite length interval of fixed points.

\end{document}